\documentclass[twoside]{article}
\usepackage[accepted]{mlmonitoring}

\makeatletter
\renewcommand{\Notice@String}{* Equal contribution. \vspace{0.7cm}\\
Preprint. Under review.}
\makeatother

\usepackage[utf8]{inputenc} % allow utf-8 input
\usepackage[T1]{fontenc}    % use 8-bit T1 fonts
\usepackage{url}            % simple URL typesetting
\usepackage{booktabs}       % professional-quality tables
\usepackage{amsfonts}       % blackboard math symbols
\usepackage{nicefrac}       % compact symbols for 1/2, etc.
\usepackage{xcolor}         % colors
\usepackage{amsmath}
\usepackage{amssymb}
\usepackage{graphicx}
\usepackage{amsthm}
\usepackage{bbm}
\usepackage{dsfont}
\usepackage{comment}
\usepackage{algorithm}
\usepackage{algpseudocode}

\newtheorem{prop}{Proposition}
\newtheorem{definition}{Definition}
\usepackage{thmtools}
\usepackage{thm-restate}
\usepackage{enumitem}
\usepackage{makecell}
\usepackage{caption}

\usepackage[round]{natbib}

\begin{document}

\twocolumn[

\customtitle{Label-Efficient Monitoring of Classification Models via Stratified Importance Sampling}

\customauthor{ Lupo Marsigli\textsuperscript{*} \And Angel Lopez de Haro\textsuperscript{*}}

\customaddress{ Amazon \And  Amazon %\AND \texttt{\{lupomars,angelglh\}@amazon.com}
} 
]

% Abstract
\begin{abstract}
Monitoring the performance of classification models in production is critical yet challenging due to strict labeling budgets, one-shot batch acquisition of labels and extremely low error rates. We propose a general framework based on Stratified Importance Sampling (SIS) that directly addresses these constraints in model monitoring. While SIS has previously been applied in specialized domains, our theoretical analysis establishes its broad applicability to the monitoring of classification models. Under mild conditions, SIS yields unbiased estimators with strict finite-sample mean squared error (MSE) improvements over both importance sampling (IS) and stratified random sampling (SRS). The framework does not rely on optimally defined proposal distributions or strata: even with noisy proxies and sub-optimal stratification, SIS can improve estimator efficiency compared to IS or SRS individually, though extreme proposal mismatch may limit these gains. Experiments across binary and multiclass tasks demonstrate consistent efficiency improvements under fixed label budgets, underscoring SIS as a principled, label-efficient, and operationally lightweight methodology for post-deployment model monitoring.
\end{abstract}

% Introduction
\section{INTRODUCTION} \label{sec:introduction}

In production machine learning (ML) systems it is critical to continuously monitor the predictive performance of deployed models \citep{ginart2022mldemon}. Models usually face conditions where even small drops in accuracy can have costly or unsafe consequences if left undetected, making reliable performance monitoring essential \citep{gama2014survey}.
Model monitoring in real-world scenarios has been approached either through label-free methods, which can surface potential distribution shifts or alarms \citep{rabanser2019failing, roschewitz2024automatic}, or through sampling-based designs. Our focus is on the latter: we assume access to an oracle that reveals the true label when queried, but the number of allowable queries is constrained by a finite budget, rendering exhaustive labeling infeasible. At the same time, real-world model monitoring is challenging as defect rates in stationary conditions are extremely low, which makes naive random sampling inefficient. The key challenge is to design label-efficient sampling methods in order to enable efficient statistical assessment of model performance under finite sample.

A large body of prior work has explored stratified random sampling (SRS) and importance sampling (IS) as label-efficient designs. 
Stratified sampling partitions the data into homogeneous groups, reducing estimator variance when strata are well-aligned with the target variable \citep{fogliato}. Importance sampling instead prioritizes instances with higher predicted error likelihood \citep{importance-sampling-paper}. Adaptive monitoring methods have also been proposed, updating sampling distributions online, but these often require assumptions for continuous re-estimation, which limits their practicality in batch annotation workflows and sacrifice the generality of a static design.
More broadly, monitoring research in ML has tended to focus on drift detection or heuristics and ad-hoc investigations on stratification and importance distributions, rather than principled, finite-sample variance reduction for metric estimation.

In this paper, we revisit stratified importance sampling (SIS), a classical variance-reduction method in Monte Carlo analysis \citep{owen2013monte}, and establish it as a general-purpose framework for model monitoring. While SIS is well-known in domains as risk management and reliability analysis \citep{kim2023stratified, xiao2020reliability}, its potential for classification monitoring under realistic post-deployment constraints has not been formalized. Our analysis is deliberately general: we do not prescribe specific strata definitions or importance weight functions. Instead, we show that even with noisy proxies, uncalibrated scores and sub-optimal strata, combining stratification and importance weighting yields systematic improvements over either SRS or IS alone. This provides a robust baseline for practitioners, with guarantees that hold under finite label budget that is operationally lightweight compared to adaptive methods.

\textbf{Problem statement.}
We consider the task of estimating classification performances under labeling constraints, in binary and multiclass settings. Annotations are collected in batches and not immediately used for model retraining. The monitoring objective is to estimate defect rates with minimal mean squared error given a fixed budget. It holds under real-world generality of unknown parametrics assumptions and distributions, accounting for potentially uncalibrated model scores and noisy proxies.
This work is motivated by high-impact applications of different kind, including breast cancer detection, fraud detection, rare disease diagnosis, and safety-critical classification tasks such as image recognition, where inaccurate estimations can have serious implications.

\textbf{Contributions and outline.} This paper makes three contributions. First, we formalize SIS as a framework for monitoring classification models under practical constraints, including one-shot batch labeling, uncalibrated model scores, and extremely low error rates. Second, we provide a finite-sample theoretical analysis showing that the SIS estimator achieves strictly lower mean squared error (MSE) than either importance sampling (IS) or stratified random sampling (SRS) under mild conditions. Our analysis also identifies scenarios in which SIS offers limited benefit, thereby delineating the scope of its applicability. Third, we validate the framework empirically through experiments on binary and multiclass tasks across both tabular and image datasets. The results demonstrate consistent efficiency improvements while maintaining robustness to noisy proxies and imperfect stratification.
Taken together, these contributions establish SIS as a principled, finite-sample–guaranteed framework for label-efficient model monitoring, providing a statistically rigorous baseline that complements adaptive and drift-detection approaches while remaining operationally lightweight.

The remainder of the paper is organized as follows. Section \ref{sec:problem-setup} formalizes the problem of estimating error signal $\epsilon$ under a fixed annotation budget and introduces the notation. Section \ref{sec:methods} adapts the SIS method to model monitoring and provide a theoretical analysis proving the increase in efficiency of SIS compared to standard static baselines. Section \ref{sec:results} presents experimental results in classification space, showing the practical gains in efficiency we achieved with SIS. Section \ref{sec:related-work} reviews the current state-of-the-art approaches in classification models monitoring. We conclude in Section \ref{sec:conclusion}.

% Problem Setup
\section{PRELIMINARY} \label{sec:problem-setup}

Consider a classifier $f$ trained and then evaluated on a hold-out test set. Upon successful evaluation, it is deployed to production where  it performs inference over a stream of instances \( (X_i)_{i \geq 1} \), where \( X_i \in \mathcal{X} \) and \( Y_i \in \mathcal{Y} = \{1, \dots, K\} \) is the unobserved label. For each \( X_i \), \( f \) produces generally uncalibrated scores \( s_y(X_i) \in [0,1] \) \citep{calibrated-scores}. The predicted label is $\hat{Y}_i(X_i) = \arg\max_{y \in \mathcal{Y}} s_y(X_i),$ so that $s(X_i) = \max_{y \in \mathcal{Y}} s_y(X_i).$ \\
At regular intervals \( T \), we collect a batch \( \mathcal{D} = \{(X_i, Y_i)\}_{i=1}^N \) of instances drawn independently from a potentially time-varying distribution \( \mathcal{F} \). Given a random draw \( (X,Y) \sim \mathcal{F} \), we define the target performance metric $\epsilon := \mathbb{E}_{\mathcal{F}}[Z]$ \citep{model-scores}, where \( Z \) is a generic error signal, e.g. the misclassification rate:
\begin{align} \label{eq:epsilon}
\epsilon = \mathbb{E}_{\mathcal{F}}[Z] = \mathbb{E}_{\mathcal{F}}[\mathds{1}_{(\hat{Y} \neq Y)}],
\end{align}

where $\mathds{1}_{(\hat{Y} \neq Y)}$ is the indicator function that equals \( 1 \) if $(\hat{Y} \neq Y)$ holds, \( 0 \) otherwise.
For notation simplicity, we omit explicit dependence on $T$ and $\mathcal{D}$.
% Extension to the multiclass setting is trivial by adjusting $Z$ accordingly.

Given the dataset $\mathcal{D}$, we could estimate $\epsilon$ by
$\hat{\epsilon}_{\mathcal{D}} = \frac{1}{N}\sum_{i=1}^N \mathds{1}_{(\hat{Y}_i \neq Y_i)}$. 
% $\hat{\epsilon}_{\mathcal{D}} = \frac{1}{N}\sum_{i=1}^N \mathds{1}_{(\hat{Y}_i \neq Y_i)}\mathds{1}_{(Y_i = 1)}$. 
However, while we have access to $X$ and to the outputs of $f$ for all $1 \leq i \leq N, Y$ is unknown. Given a finite labeling budget, we assume access to an oracle that reveals the true label $Y$ only when queried, allowing us to observe labels for a subset $\mathcal{D}^{\text{sample}} \subset \mathcal{D}$ of size $n \ll N$.
We will select these instances according to a sampling design $\pi$, which is a probability distribution over all subsets of size $n$ in $\mathcal{D}$. We denote by $\pi_i>0$ the likelihood that the $i$-th instance is included in $\mathcal{D}^{\text{sample}}$. Using the labelled data, we then obtain an estimator $\hat{\epsilon}$ of $\epsilon$:
\[
\mathcal{D}^{\text{sample}} = \left\{ (X^{\text{sample}}_i, Y^{\text{sample}}_i) \right\}_{i=1}^{n} \Rightarrow \hat{\epsilon} = g_{\pi}(\mathcal{D}^{\text{sample}}),
\]
with $g_{\pi}: \mathcal{D}^{\text{sample}} \rightarrow \mathbb{R}^{+}$ depending on $\pi$.

We measure the efficiency of the estimator $\hat{\epsilon}$ of $\epsilon$ in terms of its Mean Squared Error \citep{casella2024statistical}:
\begin{align*}
    \operatorname{MSE}_{\pi}(\hat{\epsilon}, \epsilon)&=\mathbb{E}_{\mathcal{F}}\left[\mathbb{E}_\pi\left[(\hat{\epsilon}-\epsilon)^2\right]\right] \\
&\approx\left(\mathbb{E}_{\mathcal{F}}\left[\mathbb{E}_\pi[\hat{\epsilon}]\right]-\epsilon \right)^2+\mathbb{E}_{\mathcal{F}}\left[\operatorname{Var}_\pi(\hat{\epsilon})\right],
\end{align*}
thanks to the bias-variance decomposition and $\operatorname{Var}_{\mathcal{F}}\left(\mathbb{E}_\pi[\hat{\epsilon}]\right)$ being small when $n \ll N$. Since we will only look at design-unbiased estimators, the MSE will simplify to the variance:
%\vspace{0.2cm}
\begin{align} \label{eq:mse}
    \operatorname{MSE}_{\pi}(\hat{\epsilon}, \epsilon) \approx \mathbb{E}_{\mathcal{F}}\left[\operatorname{Var}_\pi(\hat{\epsilon}) \right].
\end{align}
\begin{definition}[Relative Efficiency of two Estimators] \label{def:relative_efficiency}
We define the efficiency \citep{fogliato} of estimator $\hat{\epsilon}^{(1)}$ relative to $\widehat{\epsilon}^{(2)}$ under a sampling design $\pi$ as the inverse of the ratio of their MSEs, i.e. $\operatorname{MSE}_\pi\left(\hat{\epsilon}^{(2)},\epsilon\right) / \operatorname{MSE}_\pi\left(\hat{\epsilon}^{(1)}, \epsilon\right)$.
\end{definition}
In the following sections, we say that estimator $\hat{\epsilon}^{(1)}$ is more efficient than $\hat{\epsilon}^{(2)}$ when the relative efficiency is greater than one.

% Methods
\section{METHODS} \label{sec:methods}

Stratified importance sampling (SIS) is an established variance-reduction tool in Monte Carlo analysis. We introduce here a novel general framework that readapts SIS for monitoring classification models after deployment, targeting efficient estimation of the error signal $\epsilon=\mathbb{E}_{\mathcal{F}}[Z]$ under finite labeling budgets. SIS combines (i) stratification to control between–stratum variability and (ii) importance weighting within strata to concentrate labels on informative instances. Algorithm \ref{alg:sis} indicates how to implement SIS in the monitoring of deployed models. 
We assume \emph{proportional allocation}, $n_j/n \to w_j := |\mathcal{D}_j|/|\mathcal{D}|$, which is standard in the deployment monitoring setting \citep{lohr2021sampling}, since stratum-specific variances are unknown without labeled data. While Neyman allocation is theoretically optimal when variances are known, it is generally infeasible in practice for model monitoring, as estimating per-stratum variances would itself require substantial labeled data \citep{wesolowski2023recursive}.

\begin{algorithm}[t]
\caption{Stratified Importance Sampling (SIS) for Defect Rate Monitoring}
\label{alg:sis}
\begin{algorithmic}[1]
\Require Dataset $\mathcal{D}=\{(X_i,\hat{Y_i})\}_{i=1}^N$, model scores $s(x_i) \ \forall i$, budget $n\ll N$
\Ensure Efficient estimator $\displaystyle\hat\epsilon_{\mathrm{SIS}}$
\State Partition $\mathcal{D}$ into approximate strata $\{\mathcal{D}_j\}_{j=1}^P$ \citep{druck2011toward} \label{alg-line:partition}
\State $N_j\gets|\mathcal{D}_j|$, \ $w_j\gets N_j/N$, \ $n_j\gets\lfloor n\,w_j\rfloor$
\State Define SIS importance distribution $q(x)$ as function of $s(x_i)$, $i\in {1, \ldots, N}$ \citep{poms2021low} \label{alg-line:is}
\For{$j=1,\dots,P$}
  \State Sample $\mathcal{D}_{j}^{\text{sample}}\subset\mathcal{D}_j$ of size $n_j$ w.p.\ $q_j(\cdot) := q(\cdot | \cdot \in D_j)$
  \State Query the oracle to retrieve labels $Y^{\text{sample}}_j$ for $X^{\text{sample}}_j\in\mathcal{D}_{j}^{\text{sample}}$
  \State Set $Z_j\gets\mathds{1}_{(\hat Y^{\text{sample}}_j\neq Y^{\text{sample}}_j)}$ and compute $\displaystyle\hat\epsilon_{\mathrm{IS},j}$ according to Equation \eqref{eq:sis-definition}
\EndFor
\State Estimate $\displaystyle\hat\epsilon_{\mathrm{SIS}}$ through Equation \eqref{eq:sis-definition}
\end{algorithmic}
\end{algorithm}

Let the dataset $\mathcal{D}=\{x_i\}_{i=1}^N$ be partitioned into $P$ disjoint strata $\{\mathcal{D}_j\}_{j=1}^P$, with stratum weights $w_j = |\mathcal{D}_j|/|\mathcal{D}|$ representing the finite-population proportions. Within each stratum $j$, we consider as baseline distribution $p_j$ the uniform distribution over $\mathcal{D}_j$, which corresponds to simple random sampling restricted to that stratum \citep{sawade2010active}. Let $q(x)$ denote a global proposal distribution over $\mathcal{D}$, typically induced by model scores, and let $q_j(\cdot)=q(\cdot \mid S=j)$ be its conditional restriction to stratum $j$.\\
The SIS estimator is defined as:
\begin{align}
&\hat{\epsilon}_{\mathrm{SIS}} = \sum_{j=1}^P w_j \,\hat{\epsilon}_{\mathrm{IS},j}, \text{ s.t. }
\label{eq:sis-definition}
\end{align}
\begin{align*} \hat{\epsilon}_{\mathrm{IS},j} = \frac{1}{n_j} \sum_{k=1}^{n_j}
\, z\!\left(X^{\mathrm{sample}}_{j,k}\right)\,
\frac{p_j\!\left(X^{\mathrm{sample}}_{j,k}\right)}{q_j\!\left(X^{\mathrm{sample}}_{j,k}\right)},
\end{align*}
% where $z(x)=\mathds{1}_{(\hat{Y}(x)\neq 
% Y(x))}\mathds{1}_{(Y(x)=1)}$
where $z(x)=\mathds{1}_{(\hat{Y}(x)\neq 
Y(x))}$, and the sample in stratum $j$ consists of $n_j$ independent draws from $q_j$.

Under the conditions of
(i) absolute continuity $p_j \ll q_j$ for all $j$;
(ii) finite second moments $\mathbb{E}_{q_j}\!\big[(z(X)\,p_j(X)/q_j(X))^2\big]<\infty$;
(iii) independent sampling within strata and proportional allocation $n_j/n\to w_j$,
the estimator $\hat{\epsilon}_{\mathrm{SIS}}$ is design-unbiased and consistent (Appendix~\ref{sec:appendix-theoretical-properties-sis}). Additionally, the theoretical results in Theorem \ref{thm:SIS_IS} and \ref{thm:SIS_SRS} establish that SIS achieves lower mean squared error (MSE) than both standard importance sampling (IS) and stratified random sampling (SRS) under mild conditions. These finite-sample statistical guarantees establish SIS as a more efficient efficient estimator compared to baselines, according to definition \ref{def:relative_efficiency}.

\vspace{0.3cm}
\begin{restatable}[Efficiency improvement of the Stratified Importance Sampling estimator against Importance Sampling]{theorem}{SisIs}\label{thm:SIS_IS}
Consider the task of estimating $\epsilon=\mathbb{E}[Z]$, where $Z=z(X)\in\{0,1\}$. Let the population be partitioned into $P$ strata with population weights $w_j=\mathbb{P}_p(S=j)$, and let $r_j=\mathbb{P}_q(S=j)$ denote the stratum marginals under an Importance Sampling proposal $q$.
Within each stratum $j$, write $q_j(\cdot)=q(\cdot\mid S=j)$ and $p_j(\cdot)=p(\cdot\mid S=j)$.
Define
\[
R \;=\; z(X)\,\frac{p(X)}{q(X)}, \ \pi_j \;=\; \mathbb{E}_p[z(X)\mid S=j],
\]
and the within–stratum IS second–moment gap
\[
T_j \;:=\; \operatorname{Var}_q(R\mid S=j)\;=\; \mathbb{E}_{q_j}[R^2]-\pi_j^2.
\]

Assume:
(i) absolute continuity within strata $p_j\ll q_j$ for all $j$;
(ii) finite second moment $\mathbb{E}_q[R^2]<\infty$; 
(iii) proportional allocation for SIS, $n_j/n\to w_j$.
Consider $\widehat\epsilon_{\mathrm{SIS}}$ as defined in Equation \eqref{eq:sis-definition} and Importance sampling estimator defined as $\widehat\epsilon_{\mathrm{IS}}:=\frac1n\sum_{i=1}^n R_i$. 

Then, it holds that $\operatorname{MSE}(\widehat\epsilon_{\mathrm{SIS}},\epsilon) \le \operatorname{MSE}(\widehat\epsilon_{\mathrm{IS}},\epsilon) \text{ iff } \sum_{j=1}^P (r_j-w_j)\,T_j \;+\; \operatorname{Var}_{S\sim r}(\pi_S)\;\ge 0.$

\end{restatable}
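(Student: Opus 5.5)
The plan is to reduce both sides to variances and compare them through the law of total variance, conditioning on the stratum label $S$. Both estimators are design-unbiased: the result for $\widehat\epsilon_{\mathrm{SIS}}$ is in Appendix~\ref{sec:appendix-theoretical-properties-sis}, and $\widehat\epsilon_{\mathrm{IS}}$ is unbiased by assumption (i). So, as in Equation~\eqref{eq:mse}, each MSE equals the corresponding design variance. The inequality $\operatorname{MSE}(\widehat\epsilon_{\mathrm{SIS}},\epsilon)\le\operatorname{MSE}(\widehat\epsilon_{\mathrm{IS}},\epsilon)$ is then equivalent to $\operatorname{Var}(\widehat\epsilon_{\mathrm{SIS}})\le\operatorname{Var}(\widehat\epsilon_{\mathrm{IS}})$. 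It therefore suffices to compute both variances and show that $n$ times their difference is exactly $\sum_j (r_j-w_j)T_j+\operatorname{Var}_{S\sim r}(\pi_S)$.

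\textbf{Step 1 (SIS variance).} Within stratum $j$ the $n_j$ draws are i.i.d.\ from $q_j$, and distinct strata are sampled independently. Hence $\operatorname{Var}(\widehat\epsilon_{\mathrm{IS},j})=T_j/n_j$, where $T_j$ is the within-stratum IS variance of the weighted signal in the statement, and
\[
\operatorname{Var}(\widehat\epsilon_{\mathrm{SIS}})=\sum_{j=1}^P w_j^2\,\frac{T_j}{n_j}.
\]
Under proportional allocation (iii), $n_j=n w_j$ either exactly or to leading order, so this becomes $\frac1n\sum_j w_j T_j$. I will state the result for exact proportional allocation and note that rounding by $\lfloor n w_j\rfloor$ contributes only a lower-order term.

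\textbf{Step 2 (IS variance).} The draws are i.i.d.\ from $q$, so $\operatorname{Var}(\widehat\epsilon_{\mathrm{IS}})=\frac1n\operatorname{Var}_q(R)$. Since $S\sim r$ under $q$, the law of total variance gives
\[
\operatorname{Var}_q(R)=\mathbb{E}_{S\sim r}\big[\operatorname{Var}_q(R\mid S)\big]+\operatorname{Var}_{S\sim r}\big(\mathbb{E}_q[R\mid S]\big).
\]
The statement's definition identifies $\operatorname{Var}_q(R\mid S=j)$ with $T_j=\mathbb{E}_{q_j}[R^2]-\pi_j^2$, which means the conditional mean is $\mathbb{E}_q[R\mid S=j]=\pi_j$. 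Substituting these gives $\operatorname{Var}_q(R)=\sum_j r_j T_j+\operatorname{Var}_{S\sim r}(\pi_S)$. Subtracting Step 1 then yields
\[
n\big(\operatorname{Var}(\widehat\epsilon_{\mathrm{IS}})-\operatorname{Var}(\widehat\epsilon_{\mathrm{SIS}})\big)=\sum_j (r_j-w_j)T_j+\operatorname{Var}_{S\sim r}(\pi_S).
\]
Since $n>0$, the ``iff'' follows immediately.

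\textbf{Main obstacle.} The delicate step is justifying the identification in Step 2. Taken literally, with $R=z\,p/q$, one has
\[
\mathbb{E}_q[R\mid S=j]=\frac{w_j}{r_j}\,\pi_j,
\]
and the within-stratum ratio is $p_j/q_j=(r_j/w_j)\,p/q$. So the conditional mean equals $\pi_j$ only after the stratum-level renormalisation used inside SIS. I would first make this bookkeeping explicit: within stratum $j$, write $R$ as $z\,p_j/q_j$, the quantity whose conditional variance the theorem calls $T_j$, and read the decomposition with exactly the conditional moments the statement stipulates. I would then check in a remark that the general case only changes the second term to $\operatorname{Var}_{S\sim r}\big(w_S\pi_S/r_S\big)$ together with the corresponding rescaling of $T_j$. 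With either reading, the rest of the argument is the two-line variance comparison above. Assumption (ii) guarantees that every quantity in it is finite, and assumption (i) guarantees unbiasedness.
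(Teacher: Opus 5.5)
Your Steps 1--2 follow the paper's proof almost line by line. The paper makes exactly the identification you flag: it writes $\pi_j=\mathbb{E}_q[R\mid S=j]=\mathbb{E}_{p_j}[z(X)]$, uses one $T_j$ both for $\operatorname{Var}_q(R\mid S=j)$ in the IS decomposition and for the within-stratum SIS variance, and even asserts $\sum_j r_j\pi_j=\sum_j w_j\pi_j$.

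Your computation $\mathbb{E}_q[R\mid S=j]=w_j\pi_j/r_j$ is correct. But the obstacle is not bookkeeping, and your proposed fix fails. Reading $R$ as $z\,p_j/q_j$ on stratum $j$ changes the IS estimator: with draws from $q$ its mean becomes $\sum_j r_j\pi_j\neq\epsilon$. Your reduction of MSE to variance then no longer applies.

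Suppose instead you keep $R=z\,p/q$ and set $\tilde T_j:=\operatorname{Var}_{q_j}(z\,p_j/q_j)$, which is what actually governs $\widehat\epsilon_{\mathrm{IS},j}$. Then $\operatorname{Var}_q(R\mid S=j)=(w_j/r_j)^2\tilde T_j$, and with $n_j=nw_j$ your two displays combine to
\[
n\bigl(\operatorname{Var}(\widehat\epsilon_{\mathrm{IS}})-\operatorname{Var}(\widehat\epsilon_{\mathrm{SIS}})\bigr)=\sum_{j=1}^P\frac{w_j\,(w_j-r_j)}{r_j}\,\tilde T_j+\operatorname{Var}_{S\sim r}\!\Bigl(\frac{w_S\,\pi_S}{r_S}\Bigr).
\]
The mismatch coefficients have the opposite sign to $r_j-w_j$; in terms of the literal $T_j$ they are $\tfrac{r_j}{w_j}(w_j-r_j)$. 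So no positive rescaling of $T_j$ turns this into the stated condition, and your closing remark about the general case is incorrect.

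Consequently the ``iff'' cannot be proved as stated: it is false whenever $r_j\neq w_j$ on a stratum with $\pi_j>0$. Take two strata with the following data:
\begin{itemize}
\item $w=(\tfrac12,\tfrac12)$, $r=(\tfrac9{10},\tfrac1{10})$ and $\pi=(\tfrac9{10},\tfrac1{10})$;
\item $q_1=p_1$, so $\tilde T_1=\tfrac9{100}$;
\item $q_2$ nearly optimal, so $\tilde T_2\approx 0$.
\end{itemize}
Then $w_j\pi_j/r_j=\tfrac12$ in both strata, so the between-strata term vanishes. This gives $n\operatorname{Var}(\widehat\epsilon_{\mathrm{IS}})\approx\tfrac1{40}<\tfrac{9}{200}\approx n\operatorname{Var}(\widehat\epsilon_{\mathrm{SIS}})$, so IS is strictly better. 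Yet the stated quantity $\sum_j(r_j-w_j)T_j+\operatorname{Var}_{S\sim r}(\pi_S)\approx\tfrac25T_1+\tfrac{36}{625}$ is strictly positive. It stays positive under either reading of $T_j$, and also if $\pi_S$ is replaced by the true conditional mean.

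The statement's stipulated moments are mutually consistent only if $r_j=w_j$ on every stratum with $\pi_j>0$. In that case your argument (and the paper's) goes through. However, the mismatch term then vanishes, and the criterion degenerates to $\operatorname{Var}_{S\sim r}(\pi_S)\ge 0$, which always holds. What your method actually establishes is the corrected identity above.

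Separately, an exact ``iff'' needs $n_j=nw_j$ exactly. An $o(1/n)$ rounding remainder from $\lfloor nw_j\rfloor$ can flip the sign when the bracket is near zero.
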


Theorem \ref{thm:SIS_IS} states how the variance gap between SIS and IS splits into two components: 
\begin{align*}
    &\operatorname{Var}(\widehat\epsilon_{\mathrm{IS}}) - \operatorname{Var}(\widehat\epsilon_{\mathrm{SIS}})
    \;= \notag \\
    & = \;\frac{1}{n}\left\{\sum_{j=1}^P (r_j-w_j)\,T_j \;+\; \operatorname{Var}_{S\sim r}(\pi_S)\right\}.
    \label{eq:var-reduction-sis-is}
\end{align*}
The \emph{importance sampling mismatch} term $\sum_{j=1}^P (r_j-w_j)T_j$ captures how the proposal $q$ allocates sampling probability across strata relative to proportional allocation; and the second \emph{inter-stratum variance} term $\operatorname{Var}_{S\sim r}(\pi_S)$, captures the isolation of variance between strata. Thus, SIS yields a more efficient estimator when the variance reduction achieved by eliminating inter-stratum variability outweighs the variance reduction of the importance sampling mismatch.

% -------UPDATED: SIS VS SRIS AISTATS------------
\vspace{0.3cm}
\begin{restatable}[Efficiency improvement of the Stratified Importance Sampling estimator against Stratified Random Sampling]{theorem}{SisSrs} \label{thm:SIS_SRS}
Let $\hat{\epsilon}_{\text{SIS}}$ denote the estimator defined in Equation~\eqref{eq:sis-definition}, and let $\hat{\epsilon}_{\text{SRS}} := \sum_{j=1}^P w_j \hat{\epsilon}_j$ denote the stratified random sampling estimator with proportional allocation, where $\hat{\epsilon}_j = \tfrac{1}{n_j}\sum_{k=1}^{n_j} z(X^{\text{sample}}_{j,k})$ is the stratum mean estimator.

Define the within-stratum variance gap as the difference between  
(i) the variance of the importance-weighted estimator when instances in stratum $j$ are sampled from the proposal distribution $q(x)$,  
and  
(ii) the variance of the simple random sampling estimator when instances are sampled uniformly from the original distribution $p(x)$:
\[
\Delta_j(q_j) 
:= \operatorname{Var}_q\!\Big(z(X)\tfrac{p(X)}{q(X)} \,\big|\, S=j\Big)
- \operatorname{Var}_p\!\big(z(X)\mid S=j\big).
\]

Then, under proportional allocation, $\operatorname{MSE}(\widehat\epsilon_{\mathrm{SIS}},\epsilon)\le \operatorname{MSE}(\widehat\epsilon_{\mathrm{SRS}},\epsilon)$ iff $\sum_{j=1}^P w_j\,\Delta_j(q_j)<0$.

\end{restatable}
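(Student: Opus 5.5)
Both estimators are design-unbiased for $\epsilon$ (the SIS case is established in Appendix~\ref{sec:appendix-theoretical-properties-sis}, and the SRS case is the special case $q_j=p_j$). By Equation~\eqref{eq:mse} the comparison therefore reduces to a comparison of variances. The plan is to compute both variances exactly under independent within-stratum draws and proportional allocation, and then subtract.

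\textbf{Step 1: the two variances.} Within stratum $j$ the $n_j$ draws are i.i.d., and the strata are sampled independently. Hence
\[
\operatorname{Var}(\widehat\epsilon_{\mathrm{SIS}})=\sum_{j=1}^P \frac{w_j^2}{n_j}\operatorname{Var}_{q}\!\Big(z(X)\tfrac{p(X)}{q(X)}\,\Big|\,S=j\Big).
\]
Here I use that the within-stratum weight $p_j/q_j$ equals $p/q$ up to the stratum normalisation. This normalisation is absorbed because $\mathbb{E}_{q_j}[z\,p_j/q_j]=\pi_j$, which is exactly how $T_j$ is written in Theorem~\ref{thm:SIS_IS}. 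In the same way,
\[
\operatorname{Var}(\widehat\epsilon_{\mathrm{SRS}})=\sum_{j=1}^P \frac{w_j^2}{n_j}\operatorname{Var}_p(z(X)\mid S=j).
\]

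\textbf{Step 2: use the allocation.} Both designs use the same $n_j$, and proportional allocation gives $n_j=n w_j$. The cleanest route is to take this equality exactly, or to work to leading order as $n_j/n\to w_j$. Then $w_j^2/n_j=w_j/n$, and subtracting the two expressions gives
\[
\operatorname{Var}(\widehat\epsilon_{\mathrm{SIS}})-\operatorname{Var}(\widehat\epsilon_{\mathrm{SRS}})=\frac1n\sum_{j=1}^P w_j\,\Delta_j(q_j).
\]
The MSE inequality therefore holds exactly when this sum is nonpositive. The theorem is stated with a strict inequality in the condition; I would note that strict negativity gives a strict MSE improvement, and that the boundary case $\sum_j w_j\Delta_j(q_j)=0$ gives equality.

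\textbf{Main obstacle.} The main difficulty is bookkeeping rather than analysis, and it has two parts.

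First, one must reconcile the global weight $p/q$ in the definition of $\Delta_j$ with the stratum-conditional weight $p_j/q_j$ actually used by the estimator. I would check this explicitly: since $p_j=p/w_j$ and $q_j=q/r_j$ on stratum $j$, we have $p_j/q_j=(r_j/w_j)\,p/q$. The statement's conditional variance should therefore be read with the normalised weight. Alternatively, one can observe that $\Delta_j$ is defined precisely as the variance of the within-stratum IS term the estimator uses.

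Second, the floor in $n_j=\lfloor n w_j\rfloor$ means proportional allocation is only asymptotic. I would handle this by assuming $n w_j$ is an integer, or by stating the result to first order in $1/n$. The common factor $w_j^2/n_j$ is identical for both estimators in either case, so the sign comparison is unaffected when the same $n_j$ are used.
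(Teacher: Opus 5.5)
Your proposal is correct and follows the paper's proof essentially step for step. Both write each variance as $\sum_j (w_j^2/n_j)$ times the corresponding within-stratum variance, invoke $n_j = n w_j$ (exactly, or up to $o(1/n)$) to get $\operatorname{Var}(\widehat\epsilon_{\mathrm{SIS}})-\operatorname{Var}(\widehat\epsilon_{\mathrm{SRS}})=\frac1n\sum_j w_j\Delta_j(q_j)$, and pass to MSE via unbiasedness.

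Your explicit handling of the normalisation $p_j/q_j=(r_j/w_j)\,p/q$ is more careful than the paper, which silently identifies the global and within-stratum weights. The same is true of your treatment of the boundary case $\sum_j w_j\Delta_j(q_j)=0$: the paper's own proof also ends with ``$\le 0$'' rather than the stated strict inequality. One small correction: with floored $n_j$ the exact gap is $\sum_j (w_j^2/n_j)\Delta_j(q_j)$, whose sign need not match that of $\sum_j w_j\Delta_j(q_j)$, so your closing remark that the sign comparison is unaffected holds only to leading order in $1/n$.
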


Theorem \ref{thm:SIS_SRS} indicates how SIS need not reduce variance in every stratum; it suffices that reductions in some strata dominate increases elsewhere, i.e., the weighted average gap is negative. This explains robustness when using a noisy proposal distribution per stratum: strong strata compensate for weak ones. Appendix~\ref{sec:appendix-proofs} proves the theorem to hold under the condition that the weighted average of within-stratum variance gaps is negative and indicates how, in the absence of such condition, SIS does not guarantee MSE reduction with respect to SRS.

These two theoretical statements establish that SIS achieves a more label-efficient estimator relative to both IS and SRS under finite samples and mild assumptions. 
Since true labels are unknown in deployment, IS and SRS each provide only limited variance reduction when relying solely on non–calibrated scores or surrogate stratification variables. 
By combining stratification and importance weighting, SIS inherits robustness to uncalibrated scores from SRS while preserving the efficiency gains of IS when scores are informative. 
Although SIS cannot guarantee variance reduction when proposal mismatch is too severe (Theorem~\ref{thm:SIS_IS}) or when the weighted average gap is non–negative (Theorem~\ref{thm:SIS_SRS}), it consistently leverages approximate strata and imperfect proposals to deliver efficiency improvements beyond those attainable by either method alone.

% Results
\section{EXPERIMENTS}
\label{sec:results}

\begin{table*}[t]
\centering
\caption{Summary of datasets and classifiers used in experiments. Defect rate refers to the proportion of misclassified instances under the deployed model.}
\label{tab:datasets}
\begin{tabular}{l l l l l r r r}
\toprule
Dataset & Data Type & \#\ Classes & Classifier & Test set size & Accuracy (\%) & Defect rate (\%) \\
\midrule
Breast Cancer (BCW) & Tabular & Binary & LogReg & 285 & 97.54 & 2.46 \\
Digits  & Tabular & 10 & LogReg & 899 & 96.22 & 3.78 \\
Credit default & Tabular & Binary & LogReg & 30,000 & 81.12 & 18.88 \\
MNIST  & Image & 10 & CNN & 10,000 & 99.01 & 0.99 \\
CIFAR10  & Image & 10 & CNN & 10,000 & 81.17 & 18.83 \\
Proprietary   & Tabular & Binary & XGBoost & 342,035 & 99.56 & 0.44 \\
\bottomrule
\end{tabular}
\end{table*}

\paragraph{Experimental design.} 
We evaluate SIS across six datasets spanning both tabular and image classification tasks, binary and multiclass settings, and regimes with widely different defect rates (Table~\ref{tab:datasets}). This diversity ensures that our conclusions are not tied to a single data modality or model family, but instead reflect a range of different model monitoring settings encountered in practice.
For each dataset, we simulate sample-based model monitoring by acquiring labels according to five designs: Random Sampling (RS), Stratified Random Sampling (SRS), Importance Sampling (IS), FILA, an adaptive variant of SRS by \cite{guan2022fila}, Adaptive Importance Sampling (AIS) by \cite{marchant2021needle}), and our proposed implementation of Stratified Importance Sampling (SIS), which implements the same stratification and proposal and SRS and IS, respectively. 

\begin{table*}[t]
\captionsetup{skip=5pt}
\centering
\caption{Relative efficiency (RE) with respect to Random Sampling ($\uparrow$ better). 
Representatives configurations are shown, Appendix D includes the full grid of different results and details the strata and proposal distribution implementations. 
Values are mean $\pm$ standard error over repeated 250 - 1,000 simulations.  
Highlighted the design with highest RE, per dataset and sample size.
}
\label{tab:detailed-efficiency}
\small
\begin{tabular}{l l l l l l}
\toprule
Dataset & Sample size & Strata & Proposal & Design & RE vs. Random \\
\midrule
\textbf{MNIST} 
 & 100  & PRED & -- & Stratified & 0.95 $\pm$ 0.07 \\
 & 100  & PRED & $q(x) \propto \text{score}^{0.25}$ & Importance & 7.84 $\pm$ 0.55 \\
 & 100  & PRED & -- & Adaptive stratified & 1.02 $\pm$ 0.08 \\
 & 100  & PRED & $q(x) \propto \text{score}^{0.25}$ & \textbf{Adaptive IS} & \textbf{10.31} $\pm$ \textbf{0.86} \\
 & 100  & PRED & $q(x) \propto \text{score}^{0.25}$ & SIS & 8.97 $\pm$ 0.70 \\
 & 1400 & PRED & -- & Stratified & 0.99 $\pm$ 0.06 \\
 & 1400 & PRED & $q(x) \propto \text{score}^{0.25}$ & Importance & 7.82 $\pm$ 0.48 \\
 & 1400 & PRED & -- & Adaptive stratified & 1.07 $\pm$ 0.07 \\
 & 1400 & PRED & $q(x) \propto \text{score}^{0.25}$ & \textbf{Adaptive IS} & \textbf{12.82} $\pm$ \textbf{0.82} \\
 & 1400 & PRED & $q(x) \propto \text{score}^{0.25}$ & SIS & 9.27 $\pm$ 0.58 \\
\midrule
\textbf{BCW} 
 & 50  &\texttt{worst\_perimeter}-4x3 & -- & Stratified & 1.24 $\pm$ 0.09 \\
 & 50  & -- & $q(x) \propto \text{score}^{0.8}$ & Importance & 0.92 $\pm$ 0.07 \\
  & 50  & \texttt{worst\_perimeter}-4x3 & -- & Adaptive stratified & 1.48 $\pm$ 0.10 \\
 & 50  & -- & $q(x) \propto \text{score}^{0.8}$ & Adaptive IS & 1.22 $\pm$ 0.08 \\
 & 50  & \texttt{worst\_perimeter}-4x3 & $q(x) \propto \text{score}^{0.8}$ & \textbf{SIS} & \textbf{1.65} $\pm$ \textbf{0.11} \\
\midrule
\textbf{CIFAR10} 
 & 200 & BR-BIN & -- & Stratified & 1.01 $\pm$ 0.11 \\
  & 200 & -- & $q(x) \propto \text{score}^{0.50}$ & Importance & 1.12 $\pm$ 0.31 \\
 & 200 & BR-BIN & $q(x) \propto \text{score}^{0.50}$ & \textbf{SIS} & \textbf{1.66} $\pm$ \textbf{0.18} \\
  & 750 & BR-BIN & -- & Stratified & 1.21 $\pm$ 0.13 \\
  & 750 & -- & $q(x) \propto \text{score}^{0.50}$ & \textbf{Importance} & \textbf{1.62} $\pm$ \textbf{0.21} \\
 & 750 & BR-BIN & $q(x) \propto \text{score}^{0.50}$ & SIS & 1.59 $\pm$0.19 \\
\midrule
\textbf{Digits} 
 % & 40 & PCA2-4x3 & -- & Stratified & 1.36 $\pm$ 0.17 \\
  & 40 & -- & $q(x) \propto \text{score}^{1.0}$ & Importance & 2.28 $\pm$ 0.63 \\
 & 40 & PCA2-4x3 & $q(x) \propto \text{score}^{1.0}$ & \textbf{SIS} & \textbf{2.52} $\pm$ \textbf{0.52} \\
 % & 80 & PCA2-3x3 & -- & Stratified & 1.36 $\pm$ 0.17 \\
  & 80 & -- & $q(x) \propto \text{score}^{0.5}$ & \textbf{Importance} & \textbf{3.44} $\pm$ \textbf{0.48} \\
 & 80 & PCA2-3x3 & $q(x) \propto \text{score}^{0.5}$ & SIS & 2.20 $\pm$ 0.29 \\
\midrule
\textbf{Credit default} 
 & 500 & X2 & -- & Stratified & 1.11 $\pm$ 0.14 \\
  & 500 & -- & $q(x) \propto \text{score}^{0.5}$ & Importance & 1.02 $\pm$ 0.13 \\
 & 500 & X2 & $q(x) \propto \text{score}^{0.5}$ & \textbf{SIS} & \textbf{1.42} $\pm$ \textbf{0.18} \\
\midrule
\textbf{Proprietary} 
 & 500 & PRED x ModelVersion & -- & Stratified & 2.57 $\pm$ 0.17 \\
 & 500 & -- & $q(x) \propto \text{score}^{0.5}$ & Importance & 4.18 $\pm$ 1.04 \\
 & 500 & PRED x ModelVersion & $q(x) \propto \text{score}^{0.5}$ & \textbf{SIS} & \textbf{6.32} $\pm$ \textbf{0.46} \\
 & 2500 & PRED x ModelVersion & -- & Stratified & 2.43 $\pm$ 0.15 \\
 & 2500 & -- & $q(x) \propto \text{score}^{0.5}$ & \textbf{Importance} & \textbf{4.86} $\pm$ \textbf{0.53} \\
 & 2500 & PRED x ModelVersion & $q(x) \propto \text{score}^{0.5}$ & SIS & 4.53 $\pm$ 0.92 \\
\bottomrule
\end{tabular}
\end{table*}

\begin{figure}[t] % or [!t] / [H] if using float package
  \centering
  \includegraphics[width=\linewidth]{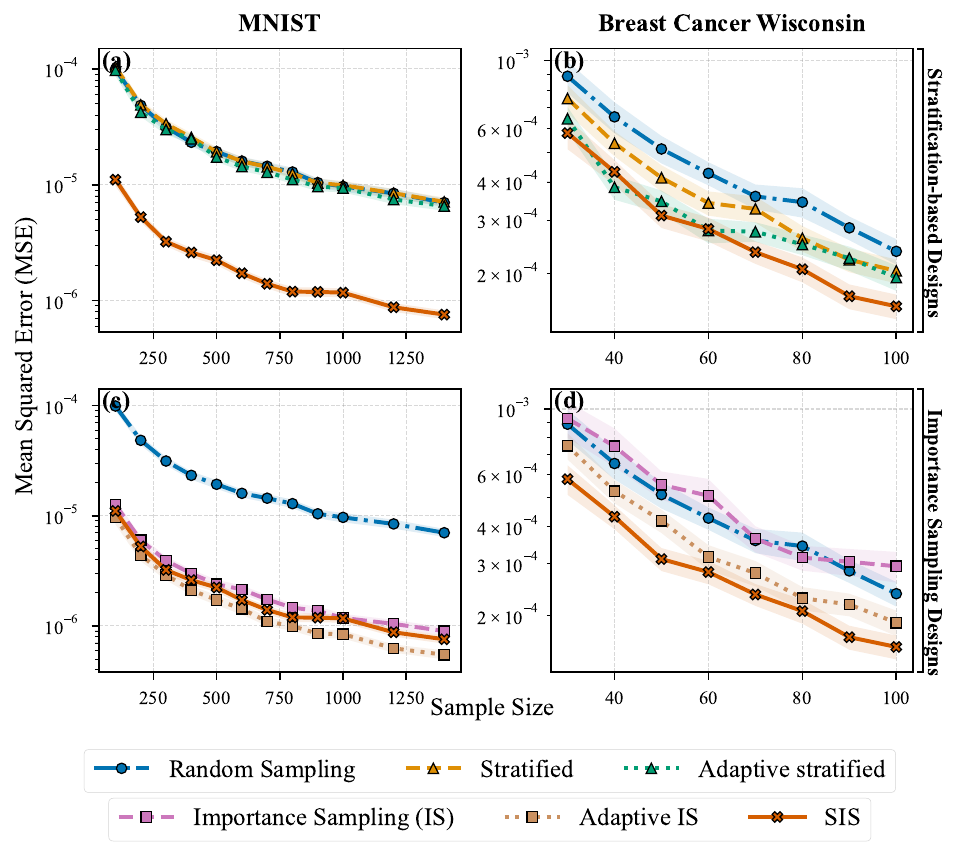}
  \caption{%
  \textbf{Comparison of sampling designs on MNIST and BCW.}
  Lines show mean MSE (log scale) versus sample size; shaded regions indicate
  \iftrue % set to \iffalse if you used ±1 SE
  95\% confidence bands ($\pm 1.96\,\mathrm{SE}$)
  \else
  $\pm 1\,\mathrm{SE}$
  \fi
  across simulations.
  }
  \label{fig:sampling-designs}
\end{figure}

\paragraph{Implementation of strata and proposal distributions.}
In our experiments, strata were constructed from simple partitions of every dataset: e.g., quantile bins of continuous covariates (perimeter in BCW, PCA components in Digits, age/credit limit in Credit Default), predicted class and brightness bins for CIFAR10, and cross-bins combining feature and score information. To ensure stability, very small strata were merged into the nearest neighbor by median score. Proposal distributions were defined globally as $q(x) \propto s(x)^\alpha$, where $s(x)$ is a score proxy such as classifier uncertainty, entropy, or confidence, and $\alpha$ tunes the sharpness of allocation. For SIS, the proposal is restricted and renormalized within each stratum. Full implementation details, including the precise feature choices, binning strategies, and search grids for $\alpha$, are provided in Appendix~D.

\paragraph{Efficiency gains from proxy strata and noisy proposal distributions.} 
Table~\ref{tab:detailed-efficiency} indicates the relative efficiencies with respect to random sampling across representative strata and proposal distribution configurations,
spanning the six different datasets and sample sizes implemented. Appendix D details the full experimental grid.
Consistent with the theoretical analysis in Theorems~\ref{thm:SIS_IS}–\ref{thm:SIS_SRS}, SIS generally improves efficiency compared to both SRS and IS across diverse datasets. Notably, SIS exhibits robustness in scenarios with ill-defined proposal distributions or weak stratification. For instance, in the BCW dataset with $n=50$, SIS achieves $\text{RE}=1.65\pm0.11$, correcting the negative impact of the poorly aligned IS proposal distribution, which underperforms random sampling with a \text{RE} of $0.92\pm0.07$. Similarly, in the Credit Default dataset at $n=500$, SIS reaches $\text{RE}=1.42\pm0.18$, improving upon both SRS ($1.11\pm0.14$) and IS ($1.02\pm0.13$). Even when stratification is only weakly informative, as in MNIST, where Stratified Sampling empirically does not improve random sampling, SIS nonetheless leverages these proxy strata to outperform efficiency of IS, attaining $\text{RE}=8.97\pm0.70$ at $n=100$. These results highlight SIS’s ability to exploit partial structure in scores and strata, yielding measurable variance reductions even in noisy or mis-specified settings.

% 2. Detail where it underperforms, analysis also of sample size
At the same time, SIS does not guarantee improvements in all regimes. When the sufficient conditions of Theorems~\ref{thm:SIS_IS}–\ref{thm:SIS_SRS} are not satisfied—such as under severe proposal mismatch or uninformative stratification, SIS may fail to outperform either IS or SRS. For example, in the Digits dataset ($n=80$), IS attains higher relative efficiency ($3.44\pm0.48$) than SIS ($2.20\pm0.29$), and in CIFAR10 with $n=750$, IS ($1.62\pm0.21$) slightly surpasses SIS ($1.59\pm0.19$). Interestingly, across several datasets (e.g., Digits, CIFAR10), SIS tends to be more competitive at smaller sample sizes, suggesting that variance stabilization from stratification can be particularly beneficial under tight labeling regimes. This behavior is consistent with our theoretical characterization: while SIS cannot eliminate the weaknesses of IS or SRS individually, it reliably moderates their variance extremes.

\paragraph{Comparison to adaptive approaches.}
Table~\ref{tab:detailed-efficiency} also benchmarks SIS with adaptive designs. While SIS provides a strong non-adaptive baseline, adaptive updates can surpass it in certain regimes by exploiting small pilot labels to refine allocations. Figure \ref{fig:sampling-designs} shows this comparison for the MNIST and BCW datasets. For instance, on \textsc{MNIST}, \emph{Adaptive IS} consistently attains lower MSE than SIS across budgets: at $n=100$, it achieves $9.62{\times}10^{-6}$ versus $1.11{\times}10^{-5}$ for SIS ; at $n=500$, $1.72{\times}10^{-6}$ versus $2.23{\times}10^{-6}$ and at $n=1400$, $5.47{\times}10^{-7}$ versus $7.56{\times}10^{-7}$. By contrast, on Breast Cancer, adaptivity yields only a narrow advantage: at $n=40$, \emph{Adaptive Stratified} achieves $3.84{\times}10^{-4}$ versus $4.32{\times}10^{-4}$ for SIS, Besides this sample size allocation, SIS regains the lowest estimation error. These results suggest that while adaptive methods can deliver sharper MSE reductions, they may also induce instability or diminishing returns outside of specific high enough label budgets, where the available sample size does not enable for extensive sample allocation iterations.

Overall, the empirical results corroborate the theoretical analysis of section \ref{sec:methods}: SIS consistently reduces mean squared error relative to random sampling and typically improves upon SRS and IS across diverse datasets and budgets (Table~\ref{tab:detailed-efficiency}, Fig.~\ref{fig:sampling-designs}). The gains are particularly strong when proposals are misspecified or stratification is only weakly informative, such as on BCW at $n{=}50$, where SIS ($\text{RE}=1.65\pm0.11$) corrects the negative impact of IS ($0.92\pm0.07$), or on Credit Default at $n{=}500$, where SIS ($1.42\pm0.18$) outperforms both SRS and IS. Even in settings where stratification adds little signal, as in MNIST, SIS still leverages proxy structure to achieve strong efficiency ($\text{RE}=8.97\pm0.70$ at $n{=}100$). At the same time, SIS is not uniformly dominant: under severe proposal mismatch or negligible inter-stratum variance, IS or SRS can be competitive or superior (e.g., Digits at $n{=}80$, CIFAR10 at $n{=}750$, and the Proprietary dataset at $n{=}2500$). Interestingly, SIS often remains more competitive at smaller budgets. Comparisons with adaptive designs further reveal that while SIS offers a robust, operationally lightweight baseline for one-shot batch monitoring, adaptive IS or stratified variants can surpass it when small pilot labels are available and proposals can be stably updated; for instance, on MNIST, where the gap in efficiency widens at higher budgets. Taken together, these findings establish SIS as a practical and principled framework for label-efficient monitoring: it inherits the stability of stratified designs under noisy proxies, retains the efficiency of IS when scores are informative, and provides a principled non-adaptive baseline that can be complemented by adaptive methods when operationally feasible.

% Related work
\section{RELATED WORK} \label{sec:related-work}

Before concluding, we position our work in relation to an extended prior research on multiple monitoring approaches and SIS applications, involving sampling, adaptive or label-free methods.

\textbf {Sampling in model monitoring}.
Stratified random sampling (SRS) has long been used to improve evaluation efficiency, yielding notable reductions in sample size and mean squared error (MSE) compared to random sampling \citep{horvitz1952generalization, fogliato, online-stratified}. Applications include domains such as speech recognition \citep{speech-stratified-sampling}, but they often rely on partial labels to define strata \citep{fogliato, liberty2016stratified}. Importance sampling (IS) has similarly improved efficiency by focusing on uncertain predictions \citep{importance-sampling-paper}, though typically under the assumption of calibrated model scores. Despite these advances, the joint use of stratification and importance weighting remains largely unstudied for model monitoring, and no general framework has been established in machine learning.

\textbf{Stratified Importance Sampling applications}.
Prior work has explored Stratified Importance Sampling (SIS) in domain-specific tasks, but rarely generalized. Glasserman et al. \citep{glasserman1999importance} combined stratification with IS for Value-at-Risk estimation, assuming Gaussian risk factors and delta approximations. Xiao et al. \citep{xiao2020reliability} proposed AK-SIS for structural reliability, using Kriging surrogates to design strata and densities, achieving efficiency but tied to surrogate decision and not a general analysis. Kim et al. \citep{kim2023stratified} stratified default patterns in credit portfolios and optimized per-stratum IS distributions, proving variance reduction guarantees against IS but not comparing with SRS. Beyond rare-event estimation, Berjisian et al. \citep{berjisian2019developing} used SIS for choice-set generation in pedestrian destination modeling.
By contrast, our framework establishes SIS as a principled approach for post-deployment classification monitoring across multiple metrics, requiring only mild assumptions, no surrogates or parametric models, and characterizing when efficiency gains are guaranteed even with noisy proxies, sub-optimal stratification and one-shot batch labels.

\textbf{Adaptive sampling approaches}.
Adaptive IS methods have been developed for highly imbalanced settings, often with a focus on F-scores. Marchant et al. \citep{marchant2021needle} introduced a batch-adaptive sampler with asymptotic consistency and CLT guarantees, while Marchant et al. \citep{marchant2017search} proposed OASIS, a sequential adaptive sampler for entity resolution with Bayesian updates achieving optimal asymptotic variance. Guan et al. \citep{guan2022fila} designed FILA, an adaptive stratified auditor for accuracy with asymptotically optimal allocation. These methods rely on parametric updates and asymptotic analysis, which can be operationally heavy and fragile under misspecified scores. Our work instead provides a non-adaptive SIS framework suited to one-shot batch annotation in deployed classifiers, proving finite-sample MSE improvements without assumptions on score calibration or stratum optimality, while remaining extensible to adaptive designs if operationally feasible. Still, such extension would require additional structural assumptions on how the importance distribution is updated and would likely sacrifice some of the generality that our static design achieves.

\textbf{Beyond sampling-based Monitoring.} Label-free or label-sparse monitoring has been studied through unsupervised shift detection and proxy-based estimation. Approaches include two-sample tests and representation-based divergences \citep{rabanser2019failing, roschewitz2024automatic}, drift-aware pipelines for CTR prediction \citep{liu2023always}, sequential shift detection using proxy errors \citep{i2024sequential}, and accuracy estimation under covariate shift via calibrated scores and density ratios \citep{bialek2024estimating}. These methods surface distributional changes or alarms but do not provide direct, statistically efficient estimates of error rates under label budgets. Our contribution instead centers on SIS as a direct sampling and estimation framework for accurate monitoring of classification metrics in practical scenarios.

% Conclusion
\section{CONCLUSION} \label{sec:conclusion}

Label-efficient model monitoring remains a central challenge, particularly under strict labeling budgets, uncalibrated scores, one-shot batch annotation, and extremely low defect rates. We present a framework for applying stratified importance sampling (SIS) to this setting, showing how the joint use of importance weighting and stratification reduces mean squared error (MSE) relative to either method alone. Unlike importance sampling (IS) or stratified random sampling (SRS), which are each limited by proxy noise in scores or imperfect stratification attributes, SIS combines their complementary strengths: IS prioritizes high-impact regions, while SRS stabilizes variance across subpopulations. Our theoretical results establish that, under mild conditions, this framework yields unbiased estimators and strict finite-sample MSE improvements over both IS and SRS. While SIS does not eliminate the intrinsic weaknesses of IS or SRS in extreme cases, it consistently delivers label-efficiency gains in realistic monitoring regimes.

Extensive experiments across binary and multiclass tasks, spanning both tabular and image datasets, confirm these findings. SIS consistently outperformed IS and SRS, reducing estimator variance and label requirements across diverse data modalities and defect rates. The experiments also highlight the robustness of SIS: when scores are informative it matches IS-level efficiency while avoiding IS instability under poor proposals, and when strata align with error structure it achieves substantial variance reduction beyond SRS. Comparisons with adaptive designs further underscored SIS as a strong non-adaptive baseline: while adaptive IS could achieve lower MSE when the labeling budget enables the refinement with pilot labels, SIS remained competitive across budgets and data types without incurring the operational overhead of adaptive updates.

\textbf{Limitations and future work.}
Future work includes extending SIS to broader performance metrics, adaptive monitoring setups, and dynamic weighting under non-stationary data. A promising direction is to develop adaptive stratified importance sampling, where proposals are updated across batches or over time. While our analysis shows SIS improves efficiency under mild assumptions, it also reveals limitations: variance reduction is not guaranteed when proposal mismatch is severe (Theorem \ref{thm:SIS_IS}) or when the weighted average of within-stratum variance gaps is non-negative (Theorem \ref{thm:SIS_SRS}). Adaptive variants may mitigate these cases, but they would require stronger assumptions on how to update strata or importance distributions, potentially sacrificing the generality and the operational lightness of the static framework.

\bibliography{references}

\clearpage
\appendix
\thispagestyle{empty}

\onecolumn

\section*{SUPPLEMENTARY MATERIAL}

\section{BASELINE SAMPLING DESIGNS}
\label{sec:baseline-sampling-designs}

In this Appendix section we formalize the limitations of existing sampling designs for model monitoring from prior work.

\textbf{Random Sampling.} Random Sampling (RS) yields an unbiased estimator \(\hat{\epsilon}\) of the defect rate \(\epsilon\), but suffers from high variance when defects are rare \citep{cochran1977sampling}, limiting its efficiency under constrained annotation budgets.

\textbf{Stratified Random Sampling.} Stratified Random Sampling (SRS) aims to reduce estimator variance by partitioning the sample space \( \mathcal{D} \) into disjoint strata \( \{\mathcal{D}_1, \dots, \mathcal{D}_P\} \) such that \( \mathcal{D} = \mathcal{D}_1 \cup \cdots \cup \mathcal{D}_P \) and \( \mathcal{D}_s \cap \mathcal{D}_t = \emptyset \) for all \( s \neq t \). The goal is to minimize the variance of the error estimator \( \operatorname{Var}_\pi(\hat{\epsilon}) \) by ensuring low variability within each stratum. The optimal stratification minimizes the sum of squared deviations around the stratum means:
\[
\{\mathcal{D}_1^{*}, \dots, \mathcal{D}_P^{*}\} = \operatorname*{argmin}_{\{\mathcal{D}_1, \dots, \mathcal{D}_P\}} \sum_{j=1}^P \sum_{k=1}^{N_j} \left(\mathds{1}_{(\hat{Y}_{j,k} \neq Y_{j,k})} - \epsilon_j \right)^2,
\]
where \( \epsilon_j \) is the unknown defect rate within stratum \( \mathcal{D}_j \), which has size \( N_j \).
The effectiveness of SRS critically depends on the quality of the stratification. In practice, achieving optimal stratification is infeasible, as true labels are unavailable at sampling time. Consequently, stratification procedures are often sub-optimal, particularly when auxiliary stratification variables are only weakly correlated with the true defect probability \citep{chen2020biascorrected, liberty2016stratified}, or when strata are poorly aligned with model error rates \citep{fogliato}. Moreover, the benefits of SRS vary widely across domains: substantial variance reduction has been observed on structured benchmarks such as CIFAR-10 \citep{pmlb-benchmark}, whereas gains are often marginal in noisy or highly imbalanced real-world datasets.
Recent work in literature proposed partially labeled datasets to guide stratification \citep{kossen2022active}.

\textbf{Importance Sampling.} Importance Sampling (IS) reallocates sampling probability toward instances likely to exhibit model misclassification, aiming to reduce the MSE of the estimator in Equation \eqref{eq:mse}. Formally, IS samples according to a importance distribution \(q(x)\) and corrects the estimator via weights \(w(x) = p(x)/q(x)\), where \(p(x)\) denotes the original data distribution. For estimating $\epsilon$ when \(Z = z(x)\) is binary, the optimal importance distribution \(q^*(x) \propto z(x)p(x)\) minimizes the MSE, concentrating sampling on defective instances. The more aggressively \(q(x)\) concentrates on defective cases, the larger the MSE reduction: \(q(x) \geq p(x)\) for all \(x\) such that \(z(x) = 1\).  In practice, \(q^*(x)\) is unattainable without access to labels prior to sampling. Practitioners rely on proxy scores \(s(x)\) to approximate defect likelihood, but miscalibration or noise in these scores can yield unstable and inefficient estimators, particularly when weights \(w(x)\) become extreme \citep{owen2013monte, cornebise2008adaptive}.

Critically, neither SRS nor IS uniformly dominates the other: SRS is robust when stratification captures meaningful structure \citep{fisch2024stratified}, while IS is efficient for low defect rates and reliable scores ~\citep{marchant2021needle}. Motivated by these complementary strengths and weaknesses, SIS design combines both principles: stratification provides robustness against uncalibrated scores, while importance weighting within strata concentrates sampling on the most informative instances, maximizing efficiency under real-world operational constraints.
\newpage
\section{PROOFS OF THE EFFICIENCY IMPROVEMENT OF SIS}
\label{sec:appendix-proofs}

In this section we provide proofs and derivations for the main properties of the Stratified Importance Sampling (SIS) estimator in Equation \eqref{eq:sis-definition}. We start by recalling and proving the two key theorems on SIS presented in section \ref{sec:methods}.

\SisIs*
\begin{proof}[\textbf{Proof}]
Let $S\in\{1,\dots,P\}$ denote the stratum index. Under the baseline data distribution $p$, the stratum weights are $w_j=\mathbb{P}_p(S=j)$ with conditional distribution per stratum $p_j(\cdot)=p(\cdot\mid S=j)$. Under the global IS proposal $q$, let $r_j=\mathbb{P}_q(S=j)$ and $q_j(\cdot)=q(\cdot\mid S=j)$.
Define
\[
R=z(X)\frac{p(X)}{q(X)},\qquad 
\pi_j=\mathbb{E}_q[R\mid S=j]=\mathbb{E}_{p_j}[z(X)],\qquad
T_j=\operatorname{Var}_q(R\mid S=j).
\]

\medskip
\noindent\emph{Unbiasedness of IS and SIS.}  
By conditioning on strata,
\[
\mathbb{E}_q[R]
=\sum_{j=1}^P r_j \pi_j
=\sum_{j=1}^P w_j \pi_j
=\mathbb{E}_p[z(X)]
=\epsilon.
\]
Thus $\widehat\epsilon_{\mathrm{IS}}$  is unbiased for $\epsilon$. We prove the unbiasedness of $\widehat\epsilon_{\mathrm{SIS}}$ in Appendix \ref{sec:appendix-theoretical-properties-sis}.

\medskip
\noindent\emph{Variance of IS.}  
Since $\widehat\epsilon_{\mathrm{IS}}=\tfrac1n\sum_{i=1}^n R_i$ with i.i.d.\ $R_i\sim q$,
\[
\operatorname{Var}(\widehat\epsilon_{\mathrm{IS}})
=\frac{1}{n}\operatorname{Var}_q(R).
\]
By the law of total variance w.r.t.\ the stratum index $S\sim r$, 
\[
\operatorname{Var}_q(R)
= \sum_{j=1}^P r_j\,\operatorname{Var}_q(R\mid S=j)
  + \operatorname{Var}_{S\sim r}(\pi_S),
\]
that is, as the average within--stratum variance (under proposal marginals $r_j$) plus the variance of the conditional means across strata.

Hence
\[
\operatorname{Var}(\widehat\epsilon_{\mathrm{IS}})
=\frac{1}{n}\Bigg\{\sum_{j=1}^P r_j T_j + \operatorname{Var}_{S\sim r}(\pi_S)\Bigg\}.
\]

\medskip
\noindent\emph{Variance of SIS.}  
Within each stratum $j$, SIS computes
\[
\widehat\epsilon_{\mathrm{IS},j}
=\frac{1}{n_j}\sum_{k=1}^{n_j} R_{j,k},\qquad R_{j,k}\stackrel{\text{i.i.d.}}{\sim}q_j,
\]
so that $\operatorname{Var}(\widehat\epsilon_{\mathrm{IS},j})=T_j/n_j$. Since strata are independent,
\[
\operatorname{Var}(\widehat\epsilon_{\mathrm{SIS}})
=\sum_{j=1}^P w_j^2\,\operatorname{Var}(\widehat\epsilon_{\mathrm{IS},j})
=\sum_{j=1}^P \frac{w_j^2}{n_j}\,T_j.
\]
Under proportional allocation $n_j/n\to w_j$, we have
$\tfrac{w_j^2}{n_j}=\tfrac{1}{n}\,w_j+o(1/n)$, and thus
\[
\operatorname{Var}(\widehat\epsilon_{\mathrm{SIS}})
=\frac{1}{n}\sum_{j=1}^P w_j T_j\;+\;o\!\left(\tfrac{1}{n}\right).
\]
If $n_j=nw_j$ exactly, the $o(1/n)$ remainder vanishes.

\medskip
\noindent\emph{Variance gap.}  
Subtracting,
\[
\operatorname{Var}(\widehat\epsilon_{\mathrm{IS}})-\operatorname{Var}(\widehat\epsilon_{\mathrm{SIS}})
=\frac{1}{n}\Bigg\{\sum_{j=1}^P (r_j-w_j)\,T_j + \operatorname{Var}_{S\sim r}(\pi_S)\Bigg\}
+o\!\left(\tfrac{1}{n}\right).
\]
With exact proportional allocation $n_j=nw_j$, the $o(1/n)$ term disappears, yielding the claimed identity.

\medskip
\noindent\emph{MSE comparison.}  
Since both estimators are unbiased, their MSEs equal their variances. Therefore,
\[
\operatorname{MSE}(\widehat\epsilon_{\mathrm{SIS}},\epsilon)\;\le\;
\operatorname{MSE}(\widehat\epsilon_{\mathrm{IS}},\epsilon)
\quad\Longleftrightarrow\quad
\sum_{j=1}^P (r_j-w_j)\,T_j + \operatorname{Var}_{S\sim r}(\pi_S)\;\ge 0.
\]

\end{proof}

\SisSrs*
\begin{proof}[\textbf{Proof}]
Let $S\in\{1,\dots,P\}$ be the stratum index, with stratum weights $w_j=\mathbb{P}_p(S=j)$ and conditionals $p_j(\cdot)=p(\cdot\mid S=j)$. Let $q$ be a proposal with stratum marginals $r_j=\mathbb{P}_q(S=j)$ and conditionals $q_j(\cdot)=q(\cdot\mid S=j)$. Define
\[
R=z(X)\frac{p(X)}{q(X)},\qquad 
T_j=\operatorname{Var}_q(R\mid S=j),\qquad 
V_j=\operatorname{Var}_p\big(z(X)\mid S=j\big).
\]
We assume $p_j\ll q_j$ and $\mathbb{E}_q[R^2]<\infty$. Both estimators target $\epsilon=\mathbb{E}_p[z(X)]$ and are unbiased:
\[
\mathbb{E}[\hat\epsilon_{\mathrm{SIS}}]=\epsilon,
\qquad
\mathbb{E}[\hat\epsilon_{\mathrm{SRS}}]=\sum_{j=1}^P w_j\,\mathbb{E}_{p_j}[z(X)]=\epsilon.
\]

\noindent\emph{Variance of SIS.}
Within stratum $j$, SIS forms the IS average
$\widehat\epsilon_{\mathrm{IS},j}=\frac{1}{n_j}\sum_{k=1}^{n_j} R_{j,k}$ with $R_{j,k}\stackrel{\text{i.i.d.}}{\sim}q_j$,
so $\operatorname{Var}(\widehat\epsilon_{\mathrm{IS},j})=T_j/n_j$.
Since strata are sampled independently,
\[
\operatorname{Var}(\hat\epsilon_{\mathrm{SIS}})
=\sum_{j=1}^P w_j^2\,\operatorname{Var}(\widehat\epsilon_{\mathrm{IS},j})
=\sum_{j=1}^P \frac{w_j^2}{n_j}\,T_j
=\frac{1}{n}\sum_{j=1}^P w_j T_j\;+\;o\!\left(\tfrac{1}{n}\right),
\]
under proportional allocation $n_j/n\to w_j$. If $n_j=nw_j$ exactly, the remainder vanishes and the last equality is exact.

\medskip
\noindent\emph{Variance of SRS.}
Within stratum $j$, SRS draws $X^{\text{sample}}_{j,1},\dots,X^{\text{sample}}_{j,n_j}\stackrel{\text{i.i.d.}}{\sim}p_j$ and computes
$\hat\epsilon_j=\frac{1}{n_j}\sum_{k=1}^{n_j} z(X^{\text{sample}}_{j,k})$,
hence $\operatorname{Var}(\hat\epsilon_j)=V_j/n_j$.
Again using independence across strata,
\[
\operatorname{Var}(\hat\epsilon_{\mathrm{SRS}})
=\sum_{j=1}^P w_j^2\,\operatorname{Var}(\hat\epsilon_j)
=\sum_{j=1}^P \frac{w_j^2}{n_j}\,V_j
=\frac{1}{n}\sum_{j=1}^P w_j V_j\;+\;o\!\left(\tfrac{1}{n}\right),
\]
with exact equality if $n_j=nw_j$.

\medskip
\noindent\emph{Variance gap.}
Subtracting the two displays and using the definition
\[
\Delta_j(q_j)\;:=\;\operatorname{Var}_q\!\Big(z(X)\tfrac{p(X)}{q(X)}\,\big|\,S=j\Big)\;-\;\operatorname{Var}_p\!\big(z(X)\mid S=j\big)\;=\;T_j - V_j,
\]
we obtain
\[
\operatorname{Var}(\hat\epsilon_{\mathrm{SIS}})-\operatorname{Var}(\hat\epsilon_{\mathrm{SRS}})
=\frac{1}{n}\sum_{j=1}^P w_j\,\Delta_j(q_j)\;+\;o\!\left(\tfrac{1}{n}\right).
\]
Under exact proportional allocation $n_j=nw_j$, the $o(1/n)$ term disappears and the stated identity holds.

\medskip
\noindent\emph{MSE comparison.}
Since both estimators are unbiased for $\epsilon$, their MSEs equal their variances. Hence
\[
\operatorname{MSE}(\widehat\epsilon_{\mathrm{SIS}},\epsilon)\;\le\;\operatorname{MSE}(\widehat\epsilon_{\mathrm{SRS}},\epsilon)
\quad\Longleftrightarrow\quad
\sum_{j=1}^P w_j\,\Delta_j(q_j)\;\le\;0,
\]
with equality iff $\sum_{j=1}^P w_j\,\Delta_j(q_j)=0$ (again, exactly under $n_j=nw_j$ and up to $o(1/n)$ otherwise).
\end{proof}
\newpage
\section{THEORETICAL PROPERTIES OF SIS ESTIMATOR}
\label{sec:appendix-theoretical-properties-sis}
In the following proofs, we show the unbiasedness and consistency properties of the proposed Stratified Importance Sampling (SIS) estimator in Equation \eqref{eq:sis-definition}.

\begin{prop}
    The Stratified Importance Sampling (SIS) estimator \(\hat{\epsilon}_{\text{SIS}}\), defined in Equation \ref{eq:sis-definition}, is unbiased. That is, its expected value equals the unknown performance metric \(\epsilon\):
    \[
    \mathbb{E}[\hat{\epsilon}_{\text{SIS}}] = \epsilon.
    \]
\end{prop}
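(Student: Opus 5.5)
The proof rests on linearity of expectation together with unbiasedness of the within-stratum importance sampling average. I will show $\mathbb{E}[\hat\epsilon_{\mathrm{IS},j}]=\pi_j:=\mathbb{E}_{p_j}[z(X)]$ for every stratum $j$, and then recombine the strata with the population weights $w_j$.

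\textbf{Step 1 (single draw).} Let $X$ be one draw from $q_j$. Since $p_j\ll q_j$ (assumption (i)), the ratio $p_j/q_j$ is well defined wherever $p_j>0$. For the finite population $\mathcal{D}_j$, writing the expectation as a sum over its support gives
\[
\mathbb{E}_{q_j}\!\left[z(X)\frac{p_j(X)}{q_j(X)}\right]=\sum_{x:\,q_j(x)>0} q_j(x)\,z(x)\frac{p_j(x)}{q_j(x)}=\sum_{x} z(x)\,p_j(x)=\pi_j .
\]
Points with $q_j(x)=0$ contribute nothing on either side, precisely because absolute continuity forces $p_j(x)=0$ there. The same computation, with integrals in place of sums, covers the population-level version $q_j=q(\cdot\mid S=j)$. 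In that case the identity $p_j/q_j=(p/q)\,(r_j/w_j)$ links it to the global ratio $R$ used in Theorem~\ref{thm:SIS_IS}. The finite second moment assumption (ii) guarantees integrability, so the expectation exists.

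\textbf{Step 2 (stratum average and aggregation).} The $n_j$ draws in stratum $j$ are identically distributed from $q_j$, so linearity gives $\mathbb{E}[\hat\epsilon_{\mathrm{IS},j}]=\frac{1}{n_j}\sum_{k=1}^{n_j}\pi_j=\pi_j$. Independence across draws is not needed here; it only matters for the variance. The weights $w_j$ are fixed, being finite-population proportions, and each $n_j\ge 1$ is deterministic under the allocation of Algorithm~\ref{alg:sis}. Applying linearity again gives $\mathbb{E}[\hat\epsilon_{\mathrm{SIS}}]=\sum_j w_j\pi_j$. Finally, the law of total expectation over the stratum index gives $\sum_j w_j\pi_j=\sum_j \mathbb{P}_p(S=j)\,\mathbb{E}_p[z(X)\mid S=j]=\mathbb{E}_p[z(X)]=\epsilon$. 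This is the same conditioning step used for $\widehat\epsilon_{\mathrm{IS}}$ in the proof of Theorem~\ref{thm:SIS_IS}.

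\textbf{Main obstacle.} The step that needs care is making sure the weight in $\hat\epsilon_{\mathrm{IS},j}$ uses the \emph{within-stratum} densities $p_j/q_j$ rather than the global $p/q$. The conditional restriction $q_j=q(\cdot\mid S=j)$ has to be renormalized, so $q_j=q/r_j$ on $\mathcal{D}_j$; the baseline $p_j$ is uniform on $\mathcal{D}_j$, so $p_j=p/w_j$ there. Using the global ratio without rescaling would yield $\mathbb{E}=\pi_j\,w_j/r_j$, which is biased whenever $r_j\neq w_j$. The normalization therefore has to be written out explicitly.

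A second, smaller point concerns the allocation $n_j=\lfloor n w_j\rfloor$. It does not affect the argument as long as every $n_j\ge1$; I will state this as a standing assumption, justified by the merging of small strata. With these two points settled, the result follows from the two linearity steps above.
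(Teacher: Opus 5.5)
Your proposal is correct and follows the same route as the paper: linearity of expectation over strata, unbiasedness of each within-stratum IS average for $\epsilon_j=\pi_j$, and the decomposition $\epsilon=\sum_j w_j\epsilon_j$. The paper only asserts the within-stratum unbiasedness, so your explicit computation under $p_j\ll q_j$, your check that the weights must be $p_j/q_j=(p/q)(r_j/w_j)$ rather than $p/q$, and your caveat that $n_j=\lfloor n w_j\rfloor\ge 1$ is required add rigor without changing the argument.
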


\begin{proof}
    The SIS estimator is defined as:
    \[
    \hat{\epsilon}_{\text{SIS}} = \sum_{j=1}^P w_j \hat{\epsilon}_{\text{IS}, j}, \text { where }     \hat{\epsilon}_{\text{IS}, j} = \frac{1}{n_j} \sum_{k=1}^{n_j} z(X^{\text{sample}}_{j,k}) \frac{p_j(X^{\text{sample}}_{j,k})}{q_j(X^{\text{sample}}_{j,k})},
    \]
    where \(\hat{\epsilon}_{\text{IS}, j}\) is the stratum-specific importance sampling (IS) estimator. This estimator \(\hat{\epsilon}_{\text{IS}, j}\) is unbiased, achieving:
    \[
    \mathbb{E}[\hat{\epsilon}_{\text{IS}, j}] = \epsilon_j,
    \]
    where \(\epsilon_j\) is the performance metric within stratum \(j\).
    
    Using the linearity of expectation we get:
    \[
    \mathbb{E}[\hat{\epsilon}_{\text{SIS}}] = \mathbb{E} \left[ \sum_{j=1}^P w_j \hat{\epsilon}_{\text{IS}, j} \right]
    = \sum_{j=1}^P w_j \ \mathbb{E}[\hat{\epsilon}_{\text{IS}, j}] = \sum_{j=1}^P w_j \epsilon_j.
    \]
    By the definition of stratified sampling, the overall defect rate \(\epsilon\) is a weighted sum of the within-stratum defect rates, and this concludes the proof:
    \[
    \epsilon = \sum_{j=1}^P w_j \ \epsilon_j \Rightarrow  \mathbb{E}[\hat{\epsilon}_{\text{SIS}}] = \epsilon. 
    \]
\end{proof}

\begin{prop}
    The Stratified Importance Sampling (SIS) estimator \(\hat{\epsilon}_{\text{SIS}}\), defined in Equation \eqref{eq:sis-definition}, is consistent. That is, as the sample size \(n\) grows to infinity, the estimator converges in probability to the unknown defect rate \(\epsilon\):
    \[
    \hat{\epsilon}_{\text{SIS}} \xrightarrow{\mathbb{P}} \epsilon \quad \text{as } n \to \infty.
    \]
\end{prop}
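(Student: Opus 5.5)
We prove consistency by reducing it to the per-stratum importance sampling estimators and using the fact that $P$ is fixed and finite. By the previous proposition, $\hat{\epsilon}_{\text{SIS}}=\sum_{j=1}^P w_j\hat{\epsilon}_{\text{IS},j}$ is unbiased, with $\mathbb{E}[\hat{\epsilon}_{\text{SIS}}]=\sum_j w_j\epsilon_j=\epsilon$. Since $w_j$ and $P$ do not depend on $n$, it suffices to show $\hat{\epsilon}_{\text{IS},j}\xrightarrow{\mathbb{P}}\epsilon_j$ for each $j$. The continuous mapping theorem, applied to the finite linear combination, then gives $\hat{\epsilon}_{\text{SIS}}\xrightarrow{\mathbb{P}}\sum_j w_j\epsilon_j=\epsilon$.

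\textbf{Step 1: sample sizes grow in every stratum.} Under proportional allocation (assumption (iii)), $n_j/n\to w_j$. Every nonempty stratum has $w_j>0$, so $n_j\to\infty$ as $n\to\infty$. This is the one place where the allocation assumption is used. Empty strata contribute nothing and can be discarded at the outset.

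\textbf{Step 2: per-stratum law of large numbers.} Within stratum $j$, the terms $R_{j,k}=z(X_{j,k})p_j(X_{j,k})/q_j(X_{j,k})$ are i.i.d.\ draws under $q_j$. By absolute continuity $p_j\ll q_j$ (assumption (i)), their common mean is $\mathbb{E}_{q_j}[R]=\mathbb{E}_{p_j}[z]=\epsilon_j$. By the finite second moment (assumption (ii)), their variance $T_j=\mathbb{E}_{q_j}[R^2]-\epsilon_j^2$ is finite. Chebyshev's inequality then gives
\[
\mathbb{P}\big(|\hat{\epsilon}_{\text{IS},j}-\epsilon_j|>\delta\big)\le \frac{T_j}{n_j\delta^2}\to 0.
\]
Only a first moment is needed for the weak law, so the second-moment assumption is more than enough.

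\textbf{Step 3: a direct route via the variance.} Alternatively, we can bypass the continuous mapping argument by reusing the variance computation from the proof of Theorem~\ref{thm:SIS_SRS}:
\[
\operatorname{Var}(\hat{\epsilon}_{\text{SIS}})=\sum_j \frac{w_j^2}{n_j}T_j=\frac1n\sum_j w_jT_j+o(1/n)\to 0.
\]
Combined with unbiasedness, Chebyshev's inequality yields convergence in probability directly, and in fact $L^2$ convergence.

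The main technical point is not a difficult step but a careful one: making sure $n_j\to\infty$ for every stratum with $w_j>0$. In practice $n_j=\lfloor nw_j\rfloor$, which behaves like $nw_j$ and therefore diverges. We also need the within-stratum draws to be independent, that is, sampling with replacement from $q_j$. If draws were instead taken without replacement from a finite stratum, we would argue that the variance is bounded by its with-replacement counterpart, so the same Chebyshev bound still applies.
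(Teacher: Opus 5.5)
Your proof is correct and follows essentially the same route as the paper (per-stratum law of large numbers for $\hat{\epsilon}_{\mathrm{IS},j}$, then a fixed finite weighted sum), but it is more careful: you derive $n_j\to\infty$ from $w_j>0$, justify each per-stratum limit via Chebyshev, and in Step~3 use the valid direction ``unbiased $+$ vanishing variance $\Rightarrow$ consistent'', whereas the paper infers vanishing variance from consistency. Drop the closing aside on sampling without replacement: it lies outside the stated independent-draws assumption and is not justified as written, because $n_j$ cannot diverge within a fixed finite stratum and the plain ratio-weighted mean is generally biased under unequal-probability sampling without replacement.
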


\begin{proof}
     The SIS estimator is defined as:
    \[
    \hat{\epsilon}_{\text{SIS}} = \sum_{j=1}^P w_j \hat{\epsilon}_{\text{IS}, j}, \text { where }     \hat{\epsilon}_{\text{IS}, j} = \frac{1}{n_j} \sum_{k=1}^{n_j} z(X^{\text{sample}}_{j,k}) \frac{p_j(X^{\text{sample}}_{j,k})}{q_j(X^{\text{sample}}_{i,j})},
    \]
    since $\hat{\epsilon}_{\text{IS}, j}$ reflects the importance sampling (IS) design within each stratum $j$.
    From the Law of Large Numbers (LLN), the importance sampling estimator \(\hat{\epsilon}_{\text{IS}, j}\) is consistent for \(\epsilon_j\). Specifically:
    \[
    \hat{\epsilon}_{\text{IS}, j} \xrightarrow{\mathbb{P}} \epsilon_j \quad \text{as } n_j \to \infty \quad \forall j \in \{1, \ldots, P\},
    \]
    where \(n_j\) is the number of samples in stratum \(j\).
    
    Using the consistency of \(\hat{\epsilon}_{\text{IS}, j}\), it follows that:
    \[
    \mathbb{E}[\hat{\epsilon}_{\text{IS}, j}] \to \epsilon_j \quad \text{and} \quad \text{Var}(\hat{\epsilon}_{\text{IS}, j}) \to 0 \quad \text{as } n_j \to \infty \quad \forall j \in \{1, \ldots, P\}.
    \]
    Since \(w_j\) is a fixed proportion for each stratum, and the weighted sum of consistent estimators is itself consistent, we have:
    \[
    \hat{\epsilon}_{\text{SRIS}} = \sum_{j=1}^P w_j \cdot \hat{\epsilon}_{\text{IS}, j} \xrightarrow{\mathbb{P}} \sum_{j=1}^P w_j \cdot \epsilon_j.
    \]
    
    By the definition of stratified sampling, the parameter \(\epsilon\) is given by:
    \[
    \epsilon = \sum_{j=1}^P w_j \ \epsilon_j \Rightarrow \hat{\epsilon}_{\text{SIS}} \xrightarrow{\mathbb{P}} \epsilon,
    \]
    which proves the consistency of SIS estimator.
\end{proof}
\newpage
\section{Extended Experimental Results}
\label{sec:appendix-extended-results}

This appendix provides details supporting the empirical evaluation in Section~4. We document the datasets used, the exact implementation of strata and proposal distributions for the Stratified Importance Sampling (SIS) framework, and the complete grid of experimental results. The goal is to support reproducibility and clarify every experimental configuration.

\subsection{Description of the Datasets}
\label{subsec:appendix-extended-results-descriptions}

We report additional characteristics of the six datasets summarized in Table~1 of the main text.\footnote{Five of the datasets are publicly available, we make available the Proprietary dataset as part of this submission.}

\paragraph{Breast Cancer Wisconsin (BCW)\footnote{\url{https://archive.ics.uci.edu/dataset/17/breast+cancer+wisconsin+diagnostic}}}
Binary tabular dataset from the UCI repository containing 569 instances and 30 standardized features describing cell nuclei measurements. We train a logistic regression classifier using a 50/50 train–test split and use the test set (285 samples) for monitoring. The model achieves 97.54\% accuracy, corresponding to a 2.46\% defect rate.

\paragraph{Digits: Pen-Based Recognition of Handwritten Digits\footnote{\url{https://archive.ics.uci.edu/dataset/81/pen+based+recognition+of+handwritten+digits}}}
A multiclass (10-class) handwritten digit classification dataset from the UCI repository with 1{,}792 instances. A multinomial logistic regression classifier is trained on a 50/50 split, reaching 96.22\% accuracy (3.78\% defect rate).

\paragraph{Credit Default\footnote{\url{https://www.openml.org/search?type=data&exact_name=default-of-credit-card-clients&id=42477}}}
A tabular dataset from the OpenML repository containing 30{,}000 credit card clients, used to predict default probability (binary classification). We train a logistic regression model, obtaining 81.12\% accuracy (18.88\% defect rate).

\paragraph{MNIST\footnote{\url{https://docs.pytorch.org/vision/main/generated/torchvision.datasets.MNIST.html}}}
The canonical 10-class image benchmark (60{,}000 training, 10{,}000 test samples). We train a small convolutional neural network (CNN) achieving 99.01\% test accuracy (0.99\% defect rate). This dataset evaluates performance in the extremely low-error regime, where random sampling is highly inefficient.

\paragraph{CIFAR10\footnote{\url{https://docs.pytorch.org/vision/main/generated/torchvision.datasets.CIFAR10.html}}}
A 10-class color image dataset with 50{,}000 training and 10{,}000 test instances. A compact CNN reaches 81.17\% test accuracy (18.83\% defect rate). The higher variability and multimodal structure make it useful for assessing robustness of SIS under noisy or weak proxies.

\paragraph{Proprietary Dataset.}
A large-scale binary classification dataset from an anonymized industrial deployment (342{,}035 instances). An XGBoost model serves as the deployed classifier, achieving 99.56\% accuracy (0.44\% defect rate). This dataset reflects a realistic post-deployment monitoring scenario with near-perfect model accuracy and heterogeneous subpopulations.

\subsection{Detailed Implementation of Strata and Proposal Distributions}
\label{subsec:appendix-extended-results-strata-proposals}

This section describes the construction of strata and proposal distributions used across all experiments.  
The same high-level principles apply to all datasets:

\begin{enumerate}[]
  \item Strata are built by quantile binning either interpretable features or model-based quantities 
  (e.g., predicted class, confidence, brightness), ensuring at least two strata and merging those representing less than $0.5\%$ of samples.
  \item Proposals $q(x)$ follow the generic form $q(x) \propto s(x)^{\alpha}$, 
  with $s(x)$ an uncertainty-related score and $\alpha \in [0.25, 1.0]$.
  \item The same global $q$ is used for both IS and SIS; for SIS, $q$ is renormalized within each stratum $j$, 
  i.e.\ $q_j(x) = q(x) / \sum_{x' \in S_j} q(x')$.
\end{enumerate}

\paragraph{Breast Cancer Wisconsin (BCW).}
We stratify the test pool by combining quantile bins on the  feature \texttt{worst\_perimeter} ($4$ bins) and the classifier score (\texttt{score}, $3$ bins). Within each feature bin, score quantiles are computed, yielding up to $12$ candidate strata. Strata with fewer than $3$ samples are merged with the closest by median score.  
The proposal is $q(x) \propto \texttt{score}^{\alpha}$ with $\alpha = 0.8$.

\paragraph{Digits.}
Strata are defined by 2D binning on the second principal component (\texttt{pca2}) and classifier confidence (\texttt{conf}), with $(n_\text{feat\_bins}, n_\text{score\_bins}) = (3,3)$. Small strata ($<5$ samples) are merged based on confidence medians.  
The proposal uses classifier uncertainty, $s(x) = 1 - \texttt{conf}(x)$, with $q(x) \propto s(x)^{\alpha}$ and $\alpha = 1.0$.

\paragraph{Credit Default.}
We consider categorical and continuous features as potential stratification variables, keeping those with at least two non-tiny categories.  
Proposals are built from out-of-fold probabilities $\hat{p}$ of default:  
\[
q(x) \propto s(x)^{\alpha}, \quad s(x) \in \{0.5 - |\hat{p}-0.5|, \; -[\hat{p}\log\hat{p}+(1-\hat{p})\log(1-\hat{p})], \; \hat{p}, \; 1-\hat{p}\}.
\]
We test $\alpha = 0.5$ and  $\alpha = 1.0$.

\paragraph{MNIST.}
Each test sample is assigned to strata formed by either the predicted class (\texttt{PRED}), brightness quintiles (\texttt{BR\_BIN}), or their Cartesian product (\texttt{PREDxBR}) when all cells exceed $0.5\%$ of the data.  
The proposal is $q(x) \propto (1 - p_{\max}(x))^{\alpha}$ with $p_{\max}$ the classifier’s maximum softmax probability. We use $\alpha = 0.5$.

\paragraph{CIFAR10.}
Stratification follows the same design as MNIST: categorical strata on predicted class (\texttt{PRED}), brightness quintiles (\texttt{BR\_BIN}), or their combination (\texttt{PREDxBR}).  
Proposal families coincide with MNIST ($s(x) \in \{1-p_{\max}, \text{entropy}, p_{\max}\}$) and $\alpha = 0.5$.

\paragraph{Proprietary Dataset.}
We form categorical strata from deployment metadata: model version and predicted class (\texttt{model\_version}, \texttt{PRED}), and their interaction.  
Quantile bins on the model score (\texttt{ml\_score})—5 and 10 bins—are added for hybrid strata such as \texttt{version×score\_q5}.  
Proposal families include: 
\[
s(x) \in \left\{0.5 - |p-0.5|,\; -[p\log p+(1-p)\log(1-p)],\; p,\; 1-p\right\},
\]
with $\alpha=0.5$.

\begin{table}[h]
\centering
\caption{Summary of strata construction and proposal distributions across datasets.}
\begin{tabular}{lcccc}
\toprule
Dataset & Strata variables & Proposal family $s(x)$ & $\alpha$ \\
\midrule
BCW & \texttt{worst\_perimeter}, score & $\texttt{score}$ & 0.8 \\
Digits & PCA(2), confidence & $1-\texttt{conf}$ & 1.0 \\
Credit Default & Categorical + binned numeric & $\{0.5-|p-0.5|, H(p), p, 1-p\}$ & 0.25–1.0 \\
MNIST & predicted class × brightness & $1-p_{\max}$ & 0.5 \\
CIFAR10 & predicted class × brightness  & $1-p_{\max}$, entropy, $p_{\max}$ & 0.5 \\
Proprietary & version × predicted class / score bins  & $0.5-|p-0.5|$, entropy, $p$, $1-p$ & 0.5 \\
\bottomrule
\end{tabular}
\end{table}

\subsection{Detailed Results Grid}
\label{subsec:appendix-extended-results-results-grid}
\subsubsection{Breast Cancer Wisconsin}

Table~\ref{tab:bcw_detailed_results_scaled} reports the detailed grid of Monte Carlo results for the Breast Cancer Wisconsin (BCW) dataset. 
The reported values correspond to the mean squared error (MSE), scaled by $10^{-3}$, and the relative efficiency (RE) with respect to Random Sampling (RS). 
Across all sampling budgets, Stratified Importance Sampling (SIS) consistently outperforms both IS and SRS, achieving between $1.5\times$ and $1.7\times$ efficiency gains. 
Adaptive stratified variants (FILA) also perform robustly, with slightly higher variability but comparable improvements at moderate sample sizes ($n \in [40, 80]$). 
In contrast, pure importance sampling (IS) often underperforms random sampling, confirming that stratification is crucial when the proposal is only weakly aligned with the underlying defect structure. 
Overall, the BCW results demonstrate the stability of SIS under small-sample conditions and its clear variance reduction relative to both non-stratified and adaptive baselines.

\begin{table*}[t]
\centering
\small
\begin{tabular}{rllllll}
\toprule
\makecell{Sample \\ size} & Sample design & Strata (if any) & Proposal (if any) & MSE ($\times 10^{-3}$) & \makecell{RE w.r.t. \\ Random} \\
\midrule
30 & Random Sampling & — & — & 0.89 $\pm$ 0.05 & 1.00 $\pm$ 0.09 \\
30 & Stratified & worst\_perimeter, bins=4x3 & — & 0.75 $\pm$ 0.04 & 1.18 $\pm$ 0.10 \\
30 & Importance & — & $q(x) \propto \texttt{score}^{\alpha}$, $\alpha=0.8$ & 0.93 $\pm$ 0.07 & 0.96 $\pm$ 0.09 \\
30 & SIS & worst\_perimeter, bins=4x3 & $q(x) \propto \texttt{score}^{\alpha}$, $\alpha=0.8$ & 0.58 $\pm$ 0.03 & 1.54 $\pm$ 0.13 \\
30 & Adaptive stratified & — & — & 0.65 $\pm$ 0.03 & 1.38 $\pm$ 0.11 \\
30 & Adaptive IS & — & — & 0.80 $\pm$ 0.04 & 1.18 $\pm$ 0.09 \\
40 & Random Sampling & — & — & 0.74 $\pm$ 0.04 & 1.00 $\pm$ 0.08 \\
40 & Stratified & worst\_perimeter, bins=4x3 & — & 0.55 $\pm$ 0.03 & 1.22 $\pm$ 0.08 \\
40 & Importance & — & $q(x) \propto \texttt{score}^{\alpha}$, $\alpha=0.8$ & 0.69 $\pm$ 0.05 & 0.87 $\pm$ 0.08 \\
40 & SIS & worst\_perimeter, bins=4x3 & $q(x) \propto \texttt{score}^{\alpha}$, $\alpha=0.8$ & 0.39 $\pm$ 0.03 & 1.51 $\pm$ 0.12 \\
40 & Adaptive stratified & — & — & 0.43 $\pm$ 0.03 & 1.70 $\pm$ 0.13 \\
40 & Adaptive IS & — & — & 0.52 $\pm$ 0.04 & 1.24 $\pm$ 0.09 \\
50 & Random Sampling & — & — & 0.53 $\pm$ 0.04 & 1.00 $\pm$ 0.07 \\
50 & Stratified & worst\_perimeter, bins=4x3 & — & 0.40 $\pm$ 0.03 & 1.24 $\pm$ 0.09 \\
50 & Importance & — & $q(x) \propto \texttt{score}^{\alpha}$, $\alpha=0.8$ & 0.57 $\pm$ 0.04 & 0.92 $\pm$ 0.07 \\
50 & SIS & worst\_perimeter, bins=4x3 & $q(x) \propto \texttt{score}^{\alpha}$, $\alpha=0.8$ & 0.29 $\pm$ 0.03 & 1.65 $\pm$ 0.11 \\
50 & Adaptive stratified & — & — & 0.32 $\pm$ 0.03 & 1.48 $\pm$ 0.10 \\
50 & Adaptive IS & — & — & 0.41 $\pm$ 0.03 & 1.22 $\pm$ 0.08 \\
60 & Random Sampling & — & — & 0.43 $\pm$ 0.03 & 1.00 $\pm$ 0.06 \\
60 & Stratified & worst\_perimeter, bins=4x3 & — & 0.33 $\pm$ 0.03 & 1.25 $\pm$ 0.08 \\
60 & Importance & — & $q(x) \propto \texttt{score}^{\alpha}$, $\alpha=0.8$ & 0.46 $\pm$ 0.03 & 0.84 $\pm$ 0.07 \\
60 & SIS & worst\_perimeter, bins=4x3 & $q(x) \propto \texttt{score}^{\alpha}$, $\alpha=0.8$ & 0.28 $\pm$ 0.03 & 1.52 $\pm$ 0.10 \\
60 & Adaptive stratified & — & — & 0.29 $\pm$ 0.03 & 1.54 $\pm$ 0.10 \\
60 & Adaptive IS & — & — & 0.34 $\pm$ 0.03 & 1.35 $\pm$ 0.09 \\
70 & Random Sampling & — & — & 0.36 $\pm$ 0.03 & 1.00 $\pm$ 0.06 \\
70 & Stratified & worst\_perimeter, bins=4x3 & — & 0.33 $\pm$ 0.03 & 1.10 $\pm$ 0.08 \\
70 & Importance & — & $q(x) \propto \texttt{score}^{\alpha}$, $\alpha=0.8$ & 0.36 $\pm$ 0.03 & 0.99 $\pm$ 0.07 \\
70 & SIS & worst\_perimeter, bins=4x3 & $q(x) \propto \texttt{score}^{\alpha}$, $\alpha=0.8$ & 0.28 $\pm$ 0.03 & 1.52 $\pm$ 0.10 \\
70 & Adaptive stratified & — & — & 0.28 $\pm$ 0.03 & 1.31 $\pm$ 0.08 \\
70 & Adaptive IS & — & — & 0.32 $\pm$ 0.03 & 1.29 $\pm$ 0.09 \\
80 & Random Sampling & — & — & 0.38 $\pm$ 0.04 & 1.00 $\pm$ 0.07 \\
80 & Stratified & worst\_perimeter, bins=4x3 & — & 0.29 $\pm$ 0.03 & 1.31 $\pm$ 0.10 \\
80 & Importance & — & $q(x) \propto \texttt{score}^{\alpha}$, $\alpha=0.8$ & 0.41 $\pm$ 0.03 & 1.09 $\pm$ 0.08 \\
80 & SIS & worst\_perimeter, bins=4x3 & $q(x) \propto \texttt{score}^{\alpha}$, $\alpha=0.8$ & 0.23 $\pm$ 0.03 & 1.66 $\pm$ 0.12 \\
80 & Adaptive stratified & — & — & 0.27 $\pm$ 0.03 & 1.38 $\pm$ 0.10 \\
80 & Adaptive IS & — & — & 0.29 $\pm$ 0.03 & 1.50 $\pm$ 0.11 \\
90 & Random Sampling & — & — & 0.34 $\pm$ 0.03 & 1.00 $\pm$ 0.06 \\
90 & Stratified & worst\_perimeter, bins=4x3 & — & 0.27 $\pm$ 0.03 & 1.27 $\pm$ 0.08 \\
90 & Importance & — & $q(x) \propto \texttt{score}^{\alpha}$, $\alpha=0.8$ & 0.36 $\pm$ 0.03 & 0.93 $\pm$ 0.07 \\
90 & SIS & worst\_perimeter, bins=4x3 & $q(x) \propto \texttt{score}^{\alpha}$, $\alpha=0.8$ & 0.20 $\pm$ 0.03 & 1.68 $\pm$ 0.11 \\
90 & Adaptive stratified & — & — & 0.23 $\pm$ 0.03 & 1.26 $\pm$ 0.08 \\
90 & Adaptive IS & — & — & 0.27 $\pm$ 0.03 & 1.30 $\pm$ 0.08 \\
100 & Random Sampling & — & — & 0.33 $\pm$ 0.03 & 1.00 $\pm$ 0.06 \\
100 & Stratified & worst\_perimeter, bins=4x3 & — & 0.28 $\pm$ 0.03 & 1.16 $\pm$ 0.08 \\
100 & Importance & — & $q(x) \propto \texttt{score}^{\alpha}$, $\alpha=0.8$ & 0.34 $\pm$ 0.03 & 0.81 $\pm$ 0.06 \\
100 & SIS & worst\_perimeter, bins=4x3 & $q(x) \propto \texttt{score}^{\alpha}$, $\alpha=0.8$ & 0.22 $\pm$ 0.03 & 1.51 $\pm$ 0.10 \\
100 & Adaptive stratified & — & — & 0.27 $\pm$ 0.03 & 1.22 $\pm$ 0.08 \\
100 & Adaptive IS & — & — & 0.29 $\pm$ 0.03 & 1.25 $\pm$ 0.09 \\

\bottomrule
\end{tabular}
\caption{Detailed mean squared error (MSE, scaled by $10^{-3}$) and relative efficiency (RE) w.r.t.~Random Sampling for each sampling design on Breast Cancer Wisconsin (BCW). Strata: \texttt{worst\_perimeter} with 4 feature quantiles $\times$ 3 score quantiles; Proposal for IS/SIS: $q(x) \propto \texttt{score}^{\alpha}$ with $\alpha=0.8$. Values are mean $\pm$ SE over 1,000 Monte-Carlo simulations.}
\label{tab:bcw_detailed_results_scaled}
\end{table*}

\subsubsection{Digits}
Table \ref{tab:digits_detailed_results_scaled} reports the detailed Monte Carlo results for the Digits dataset. 
Consistent with the findings on BCW, Stratified Importance Sampling (SIS) improves estimation efficiency relative to both Random Sampling (RS) and Stratified Random Sampling (SRS), reaching relative efficiencies between $2.2\times$ and $2.5\times$ for small budgets ($n \le 60$). 
When the uncertainty-based proposal $q(x) \propto (1 - \mathrm{conf}(x))^{1.0}$ is well aligned with classifier uncertainty, both IS and SIS outperform RS substantially, with SIS achieving comparable or slightly higher stability across repetitions. 
At $n = 80$, Importance Sampling (IS) achieves the highest single efficiency ($\mathrm{RE}=1.65$).

\begin{table*}[t]
\centering
\small
\begin{tabular}{rllllll}
\toprule
\makecell{Sample \\ size} & Sample design & Strata (if any) & Proposal (if any) & MSE ($\times 10^{-3}$) & \makecell{RE w.r.t. \\ Random} \\
\midrule
40 & Random Sampling & — & — & 0.92 $\pm$ 0.08 & 1.00 $\pm$ 0.00 \\
40 & Stratified & pca2, bins=4x3 & — & 0.82 $\pm$ 0.09 & 1.30 $\pm$ 0.18 \\
40 & Importance & — & $q(x)\propto (1-\texttt{conf})^{\alpha}$, $\alpha=1.0$ & 0.39 $\pm$ 0.09 & 2.28 $\pm$ 0.63 \\
40 & SIS & pca2, bins=4x3 & $q(x)\propto (1-\texttt{conf})^{\alpha}$, $\alpha=1.0$ & 0.41 $\pm$ 0.07 & 2.52 $\pm$ 0.52 \\
\addlinespace
60 & Random Sampling & — & — & 0.61 $\pm$ 0.06 & 1.00 $\pm$ 0.00 \\
60 & Stratified & pca2, bins=4x3 & — & 0.52 $\pm$ 0.06 & 1.22 $\pm$ 0.14 \\
60 & Importance & — & $q(x)\propto (1-\texttt{conf})^{\alpha}$, $\alpha=1.0$ & 0.28 $\pm$ 0.05 & 2.20 $\pm$ 0.26 \\
60 & SIS & pca2, bins=4x3 & $q(x)\propto (1-\texttt{conf})^{\alpha}$, $\alpha=1.0$ & 0.29 $\pm$ 0.05 & 2.35 $\pm$ 0.28 \\
\addlinespace
80 & Random Sampling & — & — & 0.49 $\pm$ 0.05 & 1.00 $\pm$ 0.00 \\
80 & Stratified & pca2, bins=4x3 & — & 0.41 $\pm$ 0.05 & 1.20 $\pm$ 0.12 \\
80 & Importance & — & $q(x)\propto (1-\texttt{conf})^{\alpha}$, $\alpha=1.0$ & 0.33 $\pm$ 0.05 & 1.65 $\pm$ 0.20 \\
80 & SIS & pca2, bins=4x3 & $q(x)\propto (1-\texttt{conf})^{\alpha}$, $\alpha=1.0$ & 0.33 $\pm$ 0.05 & 1.13 $\pm$ 0.13 \\
\bottomrule
\end{tabular}
\caption{Detailed MSE (scaled by $10^{-3}$) and relative efficiency (RE) w.r.t.~Random Sampling for the Digits dataset. Configurations shown use strata on the second principal component (\texttt{pca2}, 3$\times$3 quantile grid) and an uncertainty-based proposal for IS/SIS. Values are mean $\pm$ SE over simulations.}
\label{tab:digits_detailed_results_scaled}
\end{table*}

\subsubsection{Credit Default}

Table \ref{tab:credit_detailed_results_grid} reports the performance of all sampling designs on the Credit~Default dataset. 
In this binary classification task, Stratified Importance Sampling (SIS) consistently improves estimation efficiency relative to both Random Sampling (RS) and Stratified Random Sampling (SRS), achieving relative efficiencies around $1.4\times$ at $n = 500$. 
These gains indicate that even moderately informative categorical or binned numerical strata can provide meaningful variance reduction when combined with an uncertainty-based proposal distribution $q(x) \propto s(x)^{0.5}$. 
Compared to standard Importance Sampling (IS), SIS offers more stable performance.

\begin{table*}[t]
\centering
\small
\begin{tabular}{rllllll}
\toprule
\makecell{Sample \\ size} & Sample design & Strata (if any) & Proposal (if any) & MSE ($\times 10^{-3}$) & \makecell{RE w.r.t. \\ Random} \\
\midrule
200 & Random Sampling & — & — & 0.82 $\pm$ 0.07 & 1.00 $\pm$ 0.00 \\
200 & Stratified & x2 & — & 0.65 $\pm$ 0.06 & 1.25 $\pm$ 0.16 \\
200 & Importance & — &  $q(x)\propto (\texttt{score})^{\alpha}$, ${\alpha}$=0.5 & 0.71 $\pm$ 0.06 & 1.14 $\pm$ 0.14 \\
200 & SIS & x2 &$q(x)\propto (\texttt{score})^{\alpha}$, ${\alpha}$=0.5 & 0.60 $\pm$ 0.05 & 1.35 $\pm$ 0.16 \\
300 & Random Sampling & — & — & 0.50 $\pm$ 0.04 & 1.00 $\pm$ 0.00 \\
300 & Stratified & x2 & — & 0.42 $\pm$ 0.04 & 1.19 $\pm$ 0.15 \\
300 & Importance & — &$q(x)\propto (\texttt{score})^{\alpha}$, ${\alpha}$=0.5 & 0.51 $\pm$ 0.06 & 0.99 $\pm$ 0.14 \\
300 & SIS & x2 &$q(x)\propto (\texttt{score})^{\alpha}$, ${\alpha}$=0.5 & 0.54 $\pm$ 0.05 & 0.93 $\pm$ 0.11 \\
400 & Random Sampling & — & — & 0.37 $\pm$ 0.03 & 1.00 $\pm$ 0.00 \\
400 & Stratified & x2 & — & 0.37 $\pm$ 0.03 & 1.01 $\pm$ 0.12 \\
400 & Importance & — &$q(x)\propto (\texttt{score})^{\alpha}$, ${\alpha}$=0.5 & 0.36 $\pm$ 0.03 & 1.02 $\pm$ 0.13 \\
400 & SIS & x2 &$q(x)\propto (\texttt{score})^{\alpha}$, ${\alpha}$=0.5 & 0.36 $\pm$ 0.03 & 1.04 $\pm$ 0.12 \\
500 & Random Sampling & — & — & 0.34 $\pm$ 0.03 & 1.00 $\pm$ 0.00 \\
500 & Stratified & x2 & — & 0.30 $\pm$ 0.03 & 1.11 $\pm$ 0.14 \\
500 & Importance & — &$q(x)\propto (\texttt{score})^{\alpha}$, ${\alpha}$=0.5 & 0.33 $\pm$ 0.03 & 1.02 $\pm$ 0.13 \\
500 & SIS & x2 &$q(x)\propto (\texttt{score})^{\alpha}$, ${\alpha}$=0.5 & 0.24 $\pm$ 0.02 & 1.42 $\pm$ 0.18 \\
600 & Random Sampling & — & — & 0.25 $\pm$ 0.02 & 1.00 $\pm$ 0.00 \\
600 & Stratified & x2 & — & 0.26 $\pm$ 0.02 & 0.99 $\pm$ 0.12 \\
600 & Importance & — &$q(x)\propto (\texttt{score})^{\alpha}$, ${\alpha}$=0.5 & 0.22 $\pm$ 0.02 & 1.17 $\pm$ 0.14 \\
600 & SIS & x2 &$q(x)\propto (\texttt{score})^{\alpha}$, ${\alpha}$=0.5 & 0.29 $\pm$ 0.02 & 0.87 $\pm$ 0.10 \\
800 & Random Sampling & — & — & 0.20 $\pm$ 0.02 & 1.00 $\pm$ 0.00 \\
800 & Stratified & x2 & — & 0.20 $\pm$ 0.02 & 1.00 $\pm$ 0.13 \\
800 & Importance & — &$q(x)\propto (\texttt{score})^{\alpha}$, ${\alpha}$=0.5 & 0.21 $\pm$ 0.02 & 0.96 $\pm$ 0.12 \\
800 & SIS & x2 &$q(x)\propto (\texttt{score})^{\alpha}$, ${\alpha}$=0.5 & 0.17 $\pm$ 0.01 & 1.18 $\pm$ 0.14 \\
1200 & Random Sampling & — & — & 0.12 $\pm$ 0.01 & 1.00 $\pm$ 0.00 \\
1200 & Stratified & x2 & — & 0.13 $\pm$ 0.01 & 0.92 $\pm$ 0.12 \\
1200 & Importance & — &$q(x)\propto (\texttt{score})^{\alpha}$, ${\alpha}$=0.5 & 0.12 $\pm$ 0.01 & 0.94 $\pm$ 0.12 \\
1200 & SIS & x2 &$q(x)\propto (\texttt{score})^{\alpha}$, ${\alpha}$=0.5 & 0.12 $\pm$ 0.01 & 0.96 $\pm$ 0.12 \\
2000 & Random Sampling & — & — & 0.07 $\pm$ 0.01 & 1.00 $\pm$ 0.00 \\
2000 & Stratified & x2 & — & 0.08 $\pm$ 0.01 & 0.86 $\pm$ 0.11 \\
2000 & Importance & — &$q(x)\propto (\texttt{score})^{\alpha}$, ${\alpha}$=0.5 & 0.09 $\pm$ 0.01 & 0.81 $\pm$ 0.10 \\
2000 & SIS & x2 &$q(x)\propto (\texttt{score})^{\alpha}$, ${\alpha}$=0.5 & 0.07 $\pm$ 0.01 & 0.98 $\pm$ 0.13 \\
\bottomrule
\end{tabular}
\caption{Detailed MSE (scaled by $10^{-3}$) and relative efficiency (RE) w.r.t.~Random Sampling for the Open ML Credit Default dataset. Configurations shown use strata on categorical feature X2, with a proposal built from out-of-fold probabilities for IS and SIS. Values are mean $\pm$ SE over simulations.}
\label{tab:credit_detailed_results_grid}
\end{table*}

\subsubsection{MNIST}
Tables~\ref{tab:mnist_detailed_results_grid_table_100_700} and~\ref{tab:mnist_detailed_results_grid_table_800_1400} present the evaluation results on the MNIST dataset. 
In this low-defect, multiclass image classification setting, Stratified Importance Sampling (SIS) achieves substantial efficiency improvements over Random Sampling (RS) and Stratified Random Sampling (SRS), with relative efficiencies up to approximately $9\times$ at $n = 100$. 
Compared to Importance Sampling (IS), SIS attains similar or slightly lower mean squared error (MSE) while exhibiting greater stability across sample sizes, particularly under tight label budgets. 
When contrasted with adaptive methods, SIS remains competitive at small to moderate budgets: while Adaptive Importance Sampling (AIS) can eventually surpass SIS as the labeling budget increases, the non-adaptive SIS design achieves comparable performance without iterative updates or pilot labels. 

\begin{table*}[t]
\centering
\small
\begin{tabular}{rllllll}
\toprule
\makecell{Sample \\ size} & Sample design & Strata (if any) & Proposal (if any) & MSE ($\times 10^{-6}$) & \makecell{RE w.r.t.\\ Random} \\
\midrule
100 & Random Sampling & PRED & — & 99.20 $\pm$ 4.82 & 1.00 $\pm$ 0.00 \\
100 & Stratified & PRED & — & 104.19 $\pm$ 6.05 & 0.95 $\pm$ 0.07 \\
100 & Importance & PRED &$q(x)\propto (\texttt{score})^{\alpha}$, ${\alpha}$=0.25 & 12.60 $\pm$ 0.63 & 7.84 $\pm$ 0.55 \\
100 & Importance & PRED &$q(x)\propto (\texttt{score})^{\alpha}$, ${\alpha}$=0.5 & 7.90 $\pm$ 0.62 & 12.56 $\pm$ 1.16 \\
100 & SIS & PRED &$q(x)\propto (\texttt{score})^{\alpha}$, ${\alpha}$=0.25 & 11.10 $\pm$ 0.68 & 8.97 $\pm$ 0.70 \\
100 & SIS & PRED &$q(x)\propto (\texttt{score})^{\alpha}$, ${\alpha}$=0.5 & 6.91 $\pm$ 0.43 & 14.36 $\pm$ 1.14 \\
100 & Adaptive stratified & PRED & — & 97.40 $\pm$ 5.54 & 1.02 $\pm$ 0.08 \\
100 & Adaptive IS & PRED &$q(x)\propto (\texttt{score})^{\alpha}$, ${\alpha}$=0.25 & 9.62 $\pm$ 0.65 & 10.31 $\pm$ 0.86 \\
100 & Adaptive IS & PRED &$q(x)\propto (\texttt{score})^{\alpha}$, ${\alpha}$=0.5 & 9.62 $\pm$ 0.65 & 10.31 $\pm$ 0.86 \\
200 & Random Sampling & PRED & — & 48.40 $\pm$ 2.41 & 1.00 $\pm$ 0.00 \\
200 & Stratified & PRED & — & 48.90 $\pm$ 2.40 & 0.99 $\pm$ 0.07 \\
200 & Importance & PRED &$q(x)\propto (\texttt{score})^{\alpha}$, ${\alpha}$=0.25 & 5.99 $\pm$ 0.29 & 8.09 $\pm$ 0.56 \\
200 & Importance & PRED &$q(x)\propto (\texttt{score})^{\alpha}$, ${\alpha}$=0.5 & 3.93 $\pm$ 0.25 & 12.31 $\pm$ 0.99 \\
200 & SIS & PRED &$q(x)\propto (\texttt{score})^{\alpha}$, ${\alpha}$=0.25 & 5.26 $\pm$ 0.25 & 9.20 $\pm$ 0.63 \\
200 & SIS & PRED &$q(x)\propto (\texttt{score})^{\alpha}$, ${\alpha}$=0.5 & 3.52 $\pm$ 0.20 & 13.77 $\pm$ 1.05 \\
200 & Adaptive stratified & PRED & — & 42.30 $\pm$ 1.87 & 1.14 $\pm$ 0.08 \\
200 & Adaptive IS & PRED &$q(x)\propto (\texttt{score})^{\alpha}$, ${\alpha}$=0.25 & 4.37 $\pm$ 0.26 & 11.07 $\pm$ 0.86 \\
200 & Adaptive IS & PRED &$q(x)\propto (\texttt{score})^{\alpha}$, ${\alpha}$=0.5 & 4.37 $\pm$ 0.26 & 11.07 $\pm$ 0.86 \\
300 & Random Sampling & PRED & — & 31.40 $\pm$ 1.42 & 1.00 $\pm$ 0.00 \\
300 & Stratified & PRED & — & 33.60 $\pm$ 1.62 & 0.94 $\pm$ 0.06 \\
300 & Importance & PRED &$q(x)\propto (\texttt{score})^{\alpha}$, ${\alpha}$=0.25 & 3.92 $\pm$ 0.18 & 8.02 $\pm$ 0.52 \\
300 & Importance & PRED &$q(x)\propto (\texttt{score})^{\alpha}$, ${\alpha}$=0.5 & 2.69 $\pm$ 0.14 & 11.68 $\pm$ 0.80 \\
300 & SIS & PRED &$q(x)\propto (\texttt{score})^{\alpha}$, ${\alpha}$=0.25 & 3.22 $\pm$ 0.14 & 9.74 $\pm$ 0.61 \\
300 & SIS & PRED &$q(x)\propto (\texttt{score})^{\alpha}$, ${\alpha}$=0.5 & 2.46 $\pm$ 0.17 & 12.79 $\pm$ 1.03 \\
300 & Adaptive stratified & PRED & — & 29.90 $\pm$ 1.47 & 1.05 $\pm$ 0.07 \\
300 & Adaptive IS & PRED &$q(x)\propto (\texttt{score})^{\alpha}$, ${\alpha}$=0.25 & 2.90 $\pm$ 0.15 & 10.82 $\pm$ 0.74 \\
300 & Adaptive IS & PRED &$q(x)\propto (\texttt{score})^{\alpha}$, ${\alpha}$=0.5 & 2.90 $\pm$ 0.15 & 10.82 $\pm$ 0.74 \\
400 & Random Sampling & PRED & — & 23.30 $\pm$ 1.01 & 1.00 $\pm$ 0.00 \\
400 & Stratified & PRED & — & 25.60 $\pm$ 1.16 & 0.91 $\pm$ 0.06 \\
400 & Importance & PRED &$q(x)\propto (\texttt{score})^{\alpha}$, ${\alpha}$=0.25 & 2.98 $\pm$ 0.14 & 7.80 $\pm$ 0.49 \\
400 & Importance & PRED &$q(x)\propto (\texttt{score})^{\alpha}$, ${\alpha}$=0.5 & 2.04 $\pm$ 0.10 & 11.43 $\pm$ 0.76 \\
400 & SIS & PRED &$q(x)\propto (\texttt{score})^{\alpha}$, ${\alpha}$=0.25 & 2.60 $\pm$ 0.12 & 8.94 $\pm$ 0.57 \\
400 & SIS & PRED &$q(x)\propto (\texttt{score})^{\alpha}$, ${\alpha}$=0.5 & 1.71 $\pm$ 0.08 & 13.62 $\pm$ 0.90 \\
400 & Adaptive stratified & PRED & — & 24.90 $\pm$ 1.25 & 0.93 $\pm$ 0.06 \\
400 & Adaptive IS & PRED &$q(x)\propto (\texttt{score})^{\alpha}$, ${\alpha}$=0.25 & 2.11 $\pm$ 0.11 & 11.03 $\pm$ 0.74 \\
400 & Adaptive IS & PRED &$q(x)\propto (\texttt{score})^{\alpha}$, ${\alpha}$=0.5 & 2.11 $\pm$ 0.11 & 11.03 $\pm$ 0.74 \\
500 & Random Sampling & PRED & — & 19.30 $\pm$ 0.88 & 1.00 $\pm$ 0.00 \\
500 & Stratified & PRED & — & 19.30 $\pm$ 0.92 & 1.00 $\pm$ 0.07 \\
500 & Importance & PRED &$q(x)\propto (\texttt{score})^{\alpha}$, ${\alpha}$=0.25 & 2.39 $\pm$ 0.12 & 8.07 $\pm$ 0.54 \\
500 & Importance & PRED &$q(x)\propto (\texttt{score})^{\alpha}$, ${\alpha}$=0.5 & 1.67 $\pm$ 0.08 & 11.59 $\pm$ 0.76 \\
500 & SIS & PRED &$q(x)\propto (\texttt{score})^{\alpha}$, ${\alpha}$=0.25 & 2.23 $\pm$ 0.10 & 8.66 $\pm$ 0.56 \\
500 & SIS & PRED &$q(x)\propto (\texttt{score})^{\alpha}$, ${\alpha}$=0.5 & 1.40 $\pm$ 0.08 & 13.77 $\pm$ 1.01 \\
500 & Adaptive stratified & PRED & — & 17.30 $\pm$ 0.80 & 1.11 $\pm$ 0.07 \\
500 & Adaptive IS & PRED &$q(x)\propto (\texttt{score})^{\alpha}$, ${\alpha}$=0.25 & 1.72 $\pm$ 0.09 & 11.23 $\pm$ 0.78 \\
500 & Adaptive IS & PRED &$q(x)\propto (\texttt{score})^{\alpha}$, ${\alpha}$=0.5 & 1.72 $\pm$ 0.09 & 11.23 $\pm$ 0.78 \\
600 & Random Sampling & PRED & — & 16.00 $\pm$ 0.70 & 1.00 $\pm$ 0.00 \\
600 & Stratified & PRED & — & 15.90 $\pm$ 0.69 & 1.00 $\pm$ 0.06 \\
600 & Importance & PRED &$q(x)\propto (\texttt{score})^{\alpha}$, ${\alpha}$=0.25 & 2.14 $\pm$ 0.10 & 7.46 $\pm$ 0.47 \\
600 & Importance & PRED &$q(x)\propto (\texttt{score})^{\alpha}$, ${\alpha}$=0.5 & 1.40 $\pm$ 0.07 & 11.41 $\pm$ 0.76 \\
600 & SIS & PRED &$q(x)\propto (\texttt{score})^{\alpha}$, ${\alpha}$=0.25 & 1.72 $\pm$ 0.08 & 9.30 $\pm$ 0.59 \\
600 & SIS & PRED &$q(x)\propto (\texttt{score})^{\alpha}$, ${\alpha}$=0.5 & 1.21 $\pm$ 0.06 & 13.20 $\pm$ 0.88 \\
600 & Adaptive stratified & PRED & — & 14.40 $\pm$ 0.63 & 1.11 $\pm$ 0.07 \\
600 & Adaptive IS & PRED &$q(x)\propto (\texttt{score})^{\alpha}$, ${\alpha}$=0.25 & 1.43 $\pm$ 0.08 & 11.20 $\pm$ 0.81 \\
600 & Adaptive IS & PRED &$q(x)\propto (\texttt{score})^{\alpha}$, ${\alpha}$=0.5 & 1.43 $\pm$ 0.08 & 11.20 $\pm$ 0.81 \\
\bottomrule
\end{tabular}
\caption{Detailed MSE (scaled by $10^{-6}$) and relative efficiency (RE) w.r.t.~Random Sampling for the MNIST dataset. \textbf{Sample sizes from 100 to 600}. Configurations shown use strata on the predicted labels (PRED), with a proposal from the classifier’s maximum softmax probability, over $alpha \in \{0.25, 0.5\}$. Values are mean $\pm$ SE over simulations.}
\label{tab:mnist_detailed_results_grid_table_100_700}
\end{table*}

\begin{table*}[t]
\centering
\small
\begin{tabular}{rllllll}
\toprule
\makecell{Sample \\ size} & Sample design & Strata (if any) & Proposal (if any) & MSE ($\times 10^{-6}$) & \makecell{RE w.r.t.\\ Random} \\
\midrule
800 & Random Sampling & PRED & — & 12.90 $\pm$ 0.59 & 1.00 $\pm$ 0.00 \\
800 & Stratified & PRED & — & 12.50 $\pm$ 0.57 & 1.03 $\pm$ 0.07 \\
800 & Importance & PRED &$q(x)\propto (\texttt{score})^{\alpha}$, ${\alpha}$=0.25 & 1.46 $\pm$ 0.07 & 8.80 $\pm$ 0.57 \\
800 & SIS & PRED &$q(x)\propto (\texttt{score})^{\alpha}$, ${\alpha}$=0.25 & 1.19 $\pm$ 0.06 & 10.79 $\pm$ 0.71 \\
800 & Adaptive stratified & PRED & — & 11.10 $\pm$ 0.46 & 1.16 $\pm$ 0.07 \\
800 & Adaptive IS & PRED &$q(x)\propto (\texttt{score})^{\alpha}$, ${\alpha}$=0.25 & 0.99 $\pm$ 0.05 & 12.94 $\pm$ 0.86 \\
900 & Random Sampling & PRED & — & 10.40 $\pm$ 0.48 & 1.00 $\pm$ 0.00 \\
900 & Stratified & PRED & — & 10.40 $\pm$ 0.48 & 1.01 $\pm$ 0.07 \\
900 & Importance & PRED &$q(x)\propto (\texttt{score})^{\alpha}$, ${\alpha}$=0.25 & 1.37 $\pm$ 0.07 & 7.60 $\pm$ 0.51 \\
900 & SIS & PRED &$q(x)\propto (\texttt{score})^{\alpha}$, ${\alpha}$=0.25 & 1.19 $\pm$ 0.06 & 8.77 $\pm$ 0.57 \\
900 & Adaptive stratified & PRED & — & 9.63 $\pm$ 0.44 & 1.08 $\pm$ 0.07 \\
900 & Adaptive IS & PRED &$q(x)\propto (\texttt{score})^{\alpha}$, ${\alpha}$=0.25 & 0.85 $\pm$ 0.04 & 12.24 $\pm$ 0.81 \\
1000 & Random Sampling & PRED & — & 9.67 $\pm$ 0.49 & 1.00 $\pm$ 0.00 \\
1000 & Stratified & PRED & — & 9.90 $\pm$ 0.45 & 0.98 $\pm$ 0.07 \\
1000 & Importance & PRED &$q(x)\propto (\texttt{score})^{\alpha}$, ${\alpha}$=0.25 & 1.18 $\pm$ 0.05 & 8.19 $\pm$ 0.56 \\
1000 & SIS & PRED &$q(x)\propto (\texttt{score})^{\alpha}$, ${\alpha}$=0.25 & 1.17 $\pm$ 0.05 & 8.27 $\pm$ 0.55 \\
1000 & Adaptive stratified & PRED & — & 9.36 $\pm$ 0.39 & 1.03 $\pm$ 0.07 \\
1000 & Adaptive IS & PRED &$q(x)\propto (\texttt{score})^{\alpha}$, ${\alpha}$=0.25 & 0.84 $\pm$ 0.04 & 11.50 $\pm$ 0.77 \\
1200 & Random Sampling & PRED & — & 8.43 $\pm$ 0.40 & 1.00 $\pm$ 0.00 \\
1200 & Stratified & PRED & — & 8.46 $\pm$ 0.37 & 1.00 $\pm$ 0.06 \\
1200 & Importance & PRED &$q(x)\propto (\texttt{score})^{\alpha}$, ${\alpha}$=0.25 & 1.04 $\pm$ 0.05 & 8.08 $\pm$ 0.52 \\
1200 & SIS & PRED &$q(x)\propto (\texttt{score})^{\alpha}$, ${\alpha}$=0.25 & 0.88 $\pm$ 0.04 & 9.63 $\pm$ 0.62 \\
1200 & Adaptive stratified & PRED & — & 7.46 $\pm$ 0.33 & 1.13 $\pm$ 0.07 \\
1200 & Adaptive IS & PRED &$q(x)\propto (\texttt{score})^{\alpha}$, ${\alpha}$=0.25 & 0.62 $\pm$ 0.03 & 13.48 $\pm$ 0.87 \\
1400 & Random Sampling & PRED & — & 7.01 $\pm$ 0.30 & 1.00 $\pm$ 0.00 \\
1400 & Stratified & PRED & — & 7.12 $\pm$ 0.33 & 0.99 $\pm$ 0.06 \\
1400 & Importance & PRED &$q(x)\propto (\texttt{score})^{\alpha}$, ${\alpha}$=0.25 & 0.90 $\pm$ 0.04 & 7.82 $\pm$ 0.48 \\
1400 & SIS & PRED &$q(x)\propto (\texttt{score})^{\alpha}$, ${\alpha}$=0.25 & 0.76 $\pm$ 0.03 & 9.27 $\pm$ 0.58 \\
1400 & Adaptive stratified & PRED & — & 6.55 $\pm$ 0.30 & 1.07 $\pm$ 0.07 \\
1400 & Adaptive IS & PRED &$q(x)\propto (\texttt{score})^{\alpha}$, ${\alpha}$=0.25 & 0.55 $\pm$ 0.03 & 12.82 $\pm$ 0.82 \\
\bottomrule
\end{tabular}
\caption{Detailed MSE (scaled by $10^{-6}$) and relative efficiency (RE) w.r.t.~Random Sampling for the MNIST dataset. \textbf{Sample sizes from 800 to 1,400}. Configurations shown use strata on the predicted labels (PRED), with a proposal from the classifier’s maximum softmax probability, over $alpha \in \{0.25, 0.5\}$. Values are mean $\pm$ SE over simulations.}
\label{tab:mnist_detailed_results_grid_table_800_1400}
\end{table*}

\subsubsection{CIFAR10}
Table~\ref{tab:cifar10_detailed_results_grid} summarizes the experimental results on the CIFAR10 dataset. 
This multiclass image classification task presents a substantially larger defect rate (approximately $18.8\%$). 
Across sample sizes, Stratified Importance Sampling (SIS) delivers consistent efficiency gains over Random Sampling (RS) and Stratified Random Sampling (SRS), achieving relative efficiencies around $1.6\times$ at $n = 200$. 
While standard Importance Sampling (IS) can surpass SIS at higher labeling budgets (e.g., at $n = 1,000$), SIS maintains more stable behavior across budgets, mitigating variance spikes from proposal mismatch. 

\begin{table*}[t]
\centering
\small
\begin{tabular}{rllllll}
\toprule
\makecell{Sample \\ size} & Sample design & Strata (if any) & Proposal (if any) & MSE ($\times 10^{-4}$) & \makecell{RE w.r.t.\\ Random} \\
\midrule
200 & Random Sampling & — & — & 7.71 $\pm$ 0.58 & — \\
200 & Stratified & PRED & — & 7.77 $\pm$ 0.68 & 0.99 $\pm$ 0.11 \\
200 & Stratified & BR\_BIN & — & 7.66 $\pm$ 0.58 & 1.01 $\pm$ 0.11 \\
200 & Importance & — & $q(x)\propto(\texttt{score})^{\alpha}$, $\alpha=0.5$ & 6.89 $\pm$ 1.87 & 1.12 $\pm$ 0.31 \\
200 & SIS & PRED & $q(x)\propto(\texttt{score})^{\alpha}$, $\alpha=0.5$ & 5.11 $\pm$ 0.44 & 1.51 $\pm$ 0.17 \\
200 & SIS & BR\_BIN & $q(x)\propto(\texttt{score})^{\alpha}$, $\alpha=0.5$ & 4.63 $\pm$ 0.36 & 1.66 $\pm$ 0.18 \\
300 & Random Sampling & — & — & 5.17 $\pm$ 0.40 & — \\
300 & Stratified & PRED & — & 4.88 $\pm$ 0.35 & 0.95 $\pm$ 0.11 \\
300 & Stratified & BR\_BIN & — & 5.42 $\pm$ 0.44 & 1.06 $\pm$ 0.11 \\
300 & Importance & — & $q(x)\propto(\texttt{score})^{\alpha}$, $\alpha=0.5$ & 2.99 $\pm$ 0.23 & 1.73 $\pm$ 0.19 \\
300 & SIS & PRED & $q(x)\propto(\texttt{score})^{\alpha}$, $\alpha=0.5$ & 2.91 $\pm$ 0.26 & 1.40 $\pm$ 0.19 \\
300 & SIS & BR\_BIN & $q(x)\propto(\texttt{score})^{\alpha}$, $\alpha=0.5$ & 3.70 $\pm$ 0.41 & 1.78 $\pm$ 0.21 \\
400 & Random Sampling & — & — & 4.05 $\pm$ 0.32 & — \\
400 & Stratified & PRED & — & 3.38 $\pm$ 0.30 & 1.00 $\pm$ 0.11 \\
400 & Stratified & BR\_BIN & — & 4.07 $\pm$ 0.34 & 1.20 $\pm$ 0.14 \\
400 & Importance & — & $q(x)\propto(\texttt{score})^{\alpha}$, $\alpha=0.5$ & 2.39 $\pm$ 0.20 & 1.69 $\pm$ 0.19 \\
400 & SIS & PRED & $q(x)\propto(\texttt{score})^{\alpha}$, $\alpha=0.5$ & 2.32 $\pm$ 0.19 & 1.37 $\pm$ 0.25 \\
400 & SIS & BR\_BIN & $q(x)\propto(\texttt{score})^{\alpha}$, $\alpha=0.5$ & 2.96 $\pm$ 0.49 & 1.75 $\pm$ 0.20 \\
500 & Random Sampling & — & — & 3.22 $\pm$ 0.28 & — \\
500 & Stratified & PRED & — & 2.75 $\pm$ 0.22 & 1.04 $\pm$ 0.12 \\
500 & Stratified & BR\_BIN & — & 3.10 $\pm$ 0.24 & 1.17 $\pm$ 0.14 \\
500 & Importance & — & $q(x)\propto(\texttt{score})^{\alpha}$, $\alpha=0.5$ & 2.44 $\pm$ 0.29 & 1.32 $\pm$ 0.19 \\
500 & SIS & PRED & $q(x)\propto(\texttt{score})^{\alpha}$, $\alpha=0.5$ & 1.95 $\pm$ 0.18 & 1.44 $\pm$ 0.18 \\
500 & SIS & BR\_BIN & $q(x)\propto(\texttt{score})^{\alpha}$, $\alpha=0.5$ & 2.23 $\pm$ 0.21 & 1.65 $\pm$ 0.21 \\
750 & Random Sampling & — & — & 2.31 $\pm$ 0.18 & — \\
750 & Stratified & PRED & — & 1.86 $\pm$ 0.15 & 1.24 $\pm$ 0.14 \\
750 & Stratified & BR\_BIN & — & 1.91 $\pm$ 0.14 & 1.21 $\pm$ 0.13 \\
750 & Importance & — & $q(x)\propto(\texttt{score})^{\alpha}$, $\alpha=0.5$ & 1.42 $\pm$ 0.16 & 1.62 $\pm$ 0.22 \\
750 & SIS & PRED & $q(x)\propto(\texttt{score})^{\alpha}$, $\alpha=0.5$ & 1.15 $\pm$ 0.09 & 2.01 $\pm$ 0.22 \\
750 & SIS & BR\_BIN & $q(x)\propto(\texttt{score})^{\alpha}$, $\alpha=0.5$ & 1.44 $\pm$ 0.14 & 1.59 $\pm$ 0.19 \\
1000 & Random Sampling & — & — & 1.88 $\pm$ 0.15 & — \\
1000 & Stratified & PRED & — & 1.43 $\pm$ 0.10 & 1.26 $\pm$ 0.15 \\
1000 & Stratified & BR\_BIN & — & 1.49 $\pm$ 0.13 & 1.31 $\pm$ 0.14 \\
1000 & Importance & — & $q(x)\propto(\texttt{score})^{\alpha}$, $\alpha=0.5$ & 1.09 $\pm$ 0.14 & 1.72 $\pm$ 0.25 \\
1000 & SIS & PRED & $q(x)\propto(\texttt{score})^{\alpha}$, $\alpha=0.5$ & 1.13 $\pm$ 0.15 & 1.66 $\pm$ 0.26 \\
1000 & SIS & BR\_BIN & $q(x)\propto(\texttt{score})^{\alpha}$, $\alpha=0.5$ & 1.23 $\pm$ 0.15 & 1.53 $\pm$ 0.23 \\
\bottomrule
\end{tabular}
\caption{Detailed MSE (scaled by $10^{-4}$) and relative efficiency (RE) w.r.t. Random Sampling for the CIFAR10 dataset. Configurations shown use strata on the predicted labels (PRED) and brightness quintiles (BR\_BIN), with a proposal proportional to the classifier's maximum softmax probability, with $\alpha = 0.5$. Values are mean $\pm$ SE over simulations.}
\label{tab:cifar10_detailed_results_grid}
\end{table*}

\subsubsection{Proprietary dataset}
Table \ref{tab:proprietary_detailed_results_grid} reports the results on the Proprietary dataset, representing a large-scale, real-world binary classification deployment with a near-perfect accuracy of $99.56\%$ and a defect rate below $0.5\%$. 
In this extremely low-error regime, Stratified Importance Sampling (SIS) substantially improves estimation efficiency relative to Random Sampling (RS), achieving relative efficiencies above $6\times$ at $n = 500$. 
Compared to Importance Sampling (IS), SIS provides more stable performance, especially at smaller budgets where IS exhibits higher estimator variance due to extreme weighting. 
At larger budgets (e.g., $n = 2500$), SIS remains competitive, though IS may slightly outperform it when proposals are well aligned with the true error distribution. 

\begin{table*}[t]
\centering
\small
\begin{tabular}{rllllll}
\toprule
\makecell{Sample \\ size} & Sample design & Strata (if any) & Proposal (if any) & MSE ($\times 10^{-6}$) & \makecell{RE w.r.t.\\ Random} \\
\midrule
250 & Random Sampling & — & — & 18.38 $\pm$ 1.07 & — \\
250 & Stratified &  PRED x ModelVersion & — & 6.10 $\pm$ 0.29 & 3.01 $\pm$ 0.23 \\
250 & Importance & — & $q(x) \propto (\texttt{score})^{\alpha}$, $\alpha=0.25$ & 5.83 $\pm$ 0.30 & 3.15 $\pm$ 0.24 \\
250 & Importance & — & $q(x) \propto (\texttt{score})^{\alpha}$, $\alpha=0.5$ & 4.51 $\pm$ 1.70 & 4.07 $\pm$ 1.56 \\
250 & SIS &  PRED x ModelVersion & $q(x) \propto (\texttt{score})^{\alpha}$, $\alpha=0.25$ & 3.38 $\pm$ 0.21 & 5.44 $\pm$ 0.46 \\
250 & SIS &  PRED x ModelVersion & $q(x) \propto (\texttt{score})^{\alpha}$, $\alpha=0.5$ & 2.48 $\pm$ 0.28 & 7.41 $\pm$ 0.95 \\
500 & Random Sampling & — & — & 8.53 $\pm$ 0.42 & — \\
500 & Stratified &  PRED x ModelVersion & — & 3.31 $\pm$ 0.14 & 2.57 $\pm$ 0.17 \\
500 & Importance & — & $q(x) \propto (\texttt{score})^{\alpha}$, $\alpha=0.25$ & 3.05 $\pm$ 0.14 & 2.79 $\pm$ 0.19 \\
500 & Importance & — & $q(x) \propto (\texttt{score})^{\alpha}$, $\alpha=0.5$ & 2.04 $\pm$ 0.50 & 4.18 $\pm$ 1.04 \\
500 & SIS &  PRED x ModelVersion & $q(x) \propto (\texttt{score})^{\alpha}$, $\alpha=0.25$ & 1.81 $\pm$ 0.09 & 4.72 $\pm$ 0.34 \\
500 & SIS &  PRED x ModelVersion & $q(x) \propto (\texttt{score})^{\alpha}$, $\alpha=0.5$ & 1.35 $\pm$ 0.07 & 6.32 $\pm$ 0.46 \\
750 & Random Sampling & — & — & 5.59 $\pm$ 0.27 & — \\
750 & Stratified &  PRED x ModelVersion & — & 2.31 $\pm$ 0.09 & 2.42 $\pm$ 0.15 \\
750 & Importance & — & $q(x) \propto (\texttt{score})^{\alpha}$, $\alpha=0.25$ & 2.15 $\pm$ 0.11 & 2.60 $\pm$ 0.18 \\
750 & Importance & — & $q(x) \propto (\texttt{score})^{\alpha}$, $\alpha=0.5$ & 1.33 $\pm$ 0.27 & 4.21 $\pm$ 0.87 \\
750 & SIS &  PRED x ModelVersion & $q(x) \propto (\texttt{score})^{\alpha}$, $\alpha=0.25$ & 1.29 $\pm$ 0.07 & 4.33 $\pm$ 0.30 \\
750 & SIS &  PRED x ModelVersion & $q(x) \propto (\texttt{score})^{\alpha}$, $\alpha=0.5$ & 1.01 $\pm$ 0.13 & 5.55 $\pm$ 0.76 \\
1000 & Random Sampling & — & — & 4.21 $\pm$ 0.20 & — \\
1000 & Stratified &  PRED x ModelVersion & — & 1.92 $\pm$ 0.08 & 2.19 $\pm$ 0.14 \\
1000 & Importance & — & $q(x) \propto (\texttt{score})^{\alpha}$, $\alpha=0.25$ & 1.50 $\pm$ 0.07 & 2.82 $\pm$ 0.19 \\
1000 & Importance & — & $q(x) \propto (\texttt{score})^{\alpha}$, $\alpha=0.5$ & 0.97 $\pm$ 0.16 & 4.34 $\pm$ 0.73 \\
1000 & SIS &  PRED x ModelVersion & $q(x) \propto (\texttt{score})^{\alpha}$, $\alpha=0.25$ & 0.95 $\pm$ 0.04 & 4.44 $\pm$ 0.29 \\
1000 & SIS &  PRED x ModelVersion & $q(x) \propto (\texttt{score})^{\alpha}$, $\alpha=0.5$ & 0.73 $\pm$ 0.04 & 5.75 $\pm$ 0.39 \\
1250 & Random Sampling & — & — & 3.38 $\pm$ 0.17 & — \\
1250 & Stratified &  PRED x ModelVersion & — & 1.44 $\pm$ 0.06 & 2.34 $\pm$ 0.15 \\
1250 & Importance & — & $q(x) \propto (\texttt{score})^{\alpha}$, $\alpha=0.25$ & 1.19 $\pm$ 0.05 & 2.83 $\pm$ 0.19 \\
1250 & Importance & — & $q(x) \propto (\texttt{score})^{\alpha}$, $\alpha=0.5$ & 0.65 $\pm$ 0.10 & 5.21 $\pm$ 0.87 \\
1250 & SIS &  PRED x ModelVersion & $q(x) \propto (\texttt{score})^{\alpha}$, $\alpha=0.25$ & 0.77 $\pm$ 0.03 & 4.36 $\pm$ 0.28 \\
1250 & SIS &  PRED x ModelVersion & $q(x) \propto (\texttt{score})^{\alpha}$, $\alpha=0.5$ & 0.61 $\pm$ 0.02 & 5.50 $\pm$ 0.34 \\
1500 & Random Sampling & — & — & 2.90 $\pm$ 0.14 & — \\
1500 & Stratified &  PRED x ModelVersion & — & 1.26 $\pm$ 0.05 & 2.31 $\pm$ 0.15 \\
1500 & Importance & — & $q(x) \propto (\texttt{score})^{\alpha}$, $\alpha=0.25$ & 1.03 $\pm$ 0.05 & 2.81 $\pm$ 0.19 \\
1500 & Importance & — & $q(x) \propto (\texttt{score})^{\alpha}$, $\alpha=0.5$ & 0.68 $\pm$ 0.17 & 4.25 $\pm$ 1.05 \\
1500 & SIS &  PRED x ModelVersion & $q(x) \propto (\texttt{score})^{\alpha}$, $\alpha=0.25$ & 0.65 $\pm$ 0.03 & 4.42 $\pm$ 0.29 \\
1500 & SIS &  PRED x ModelVersion & $q(x) \propto (\texttt{score})^{\alpha}$, $\alpha=0.5$ & 0.49 $\pm$ 0.02 & 5.91 $\pm$ 0.36 \\
1750 & Random Sampling & — & — & 2.38 $\pm$ 0.11 & — \\
1750 & Stratified &  PRED x ModelVersion & — & 1.06 $\pm$ 0.04 & 2.26 $\pm$ 0.14 \\
1750 & Importance & — & $q(x) \propto (\texttt{score})^{\alpha}$, $\alpha=0.25$ & 0.88 $\pm$ 0.04 & 2.71 $\pm$ 0.17 \\
1750 & Importance & — & $q(x) \propto (\texttt{score})^{\alpha}$, $\alpha=0.5$ & 0.40 $\pm$ 0.02 & 5.96 $\pm$ 0.38 \\
1750 & SIS &  PRED x ModelVersion & $q(x) \propto (\texttt{score})^{\alpha}$, $\alpha=0.25$ & 0.55 $\pm$ 0.02 & 4.34 $\pm$ 0.25 \\
1750 & SIS &  PRED x ModelVersion & $q(x) \propto (\texttt{score})^{\alpha}$, $\alpha=0.5$ & 0.54 $\pm$ 0.05 & 4.38 $\pm$ 0.45 \\
2000 & Random Sampling & — & — & 2.08 $\pm$ 0.09 & — \\
2000 & Stratified &  PRED x ModelVersion & — & 0.92 $\pm$ 0.04 & 2.27 $\pm$ 0.13 \\
2000 & Importance & — & $q(x) \propto (\texttt{score})^{\alpha}$, $\alpha=0.25$ & 0.77 $\pm$ 0.04 & 2.72 $\pm$ 0.18 \\
2000 & Importance & — & $q(x) \propto (\texttt{score})^{\alpha}$, $\alpha=0.5$ & 0.51 $\pm$ 0.09 & 4.06 $\pm$ 0.75 \\
2000 & SIS &  PRED x ModelVersion & $q(x) \propto (\texttt{score})^{\alpha}$, $\alpha=0.25$ & 0.55 $\pm$ 0.02 & 3.77 $\pm$ 0.22 \\
2000 & SIS &  PRED x ModelVersion & $q(x) \propto (\texttt{score})^{\alpha}$, $\alpha=0.5$ & 0.42 $\pm$ 0.01 & 4.96 $\pm$ 0.27 \\
2250 & Random Sampling & — & — & 1.96 $\pm$ 0.09 & — \\
2250 & Stratified &  PRED x ModelVersion & — & 0.83 $\pm$ 0.03 & 2.37 $\pm$ 0.15 \\
2250 & Importance & — & $q(x) \propto (\texttt{score})^{\alpha}$, $\alpha=0.25$ & 0.70 $\pm$ 0.03 & 2.80 $\pm$ 0.19 \\
2250 & Importance & — & $q(x) \propto (\texttt{score})^{\alpha}$, $\alpha=0.5$ & 0.45 $\pm$ 0.09 & 4.41 $\pm$ 0.87 \\
2250 & SIS &  PRED x ModelVersion & $q(x) \propto (\texttt{score})^{\alpha}$, $\alpha=0.25$ & 0.51 $\pm$ 0.02 & 3.88 $\pm$ 0.24 \\
2250 & SIS &  PRED x ModelVersion & $q(x) \propto (\texttt{score})^{\alpha}$, $\alpha=0.5$ & 0.46 $\pm$ 0.04 & 4.23 $\pm$ 0.42 \\
2500 & Random Sampling & — & — & 1.67 $\pm$ 0.07 & — \\
2500 & Stratified &  PRED x ModelVersion & — & 0.69 $\pm$ 0.03 & 2.43 $\pm$ 0.15 \\
2500 & Importance & — & $q(x) \propto (\texttt{score})^{\alpha}$, $\alpha=0.25$ & 0.60 $\pm$ 0.03 & 2.77 $\pm$ 0.17 \\
2500 & Importance & — & $q(x) \propto (\texttt{score})^{\alpha}$, $\alpha=0.5$ & 0.34 $\pm$ 0.03 & 4.86 $\pm$ 0.53 \\
2500 & SIS &  PRED x ModelVersion & $q(x) \propto (\texttt{score})^{\alpha}$, $\alpha=0.25$ & 0.36 $\pm$ 0.01 & 4.66 $\pm$ 0.28 \\
2500 & SIS &  PRED x ModelVersion & $q(x) \propto (\texttt{score})^{\alpha}$, $\alpha=0.5$ & 0.37 $\pm$ 0.07 & 4.53 $\pm$ 0.92 \\
\bottomrule
\end{tabular}
\caption{Detailed MSE (scaled by $10^{-6}$) and relative efficiency (RE) w.r.t. Random Sampling for the proprietary dataset. The strata correspond to  PRED x ModelVersion, with proposals proportional to $(\texttt{score})^{\alpha}$ for $\alpha \in \{0.25, 0.5\}$. Values are mean $\pm$ SE over simulations.}
\label{tab:proprietary_detailed_results_grid}
\end{table*}

\vfill

\end{document}